\providecommand{\doarxiv}{true}
\newcommand{\arxiv}[1]{#1}
\newcommand{\notarxiv}[1]{}
\newcommand{\arxiv}[1]{}
\newcommand{\notarxiv}[1]{#1}
\newcommand{\arxivalt}[2]{\ifthenelse{\boolean{isarxiv}}{#1}{#2}}
\newcommand{\arxivaltr}[2]{\ifthenelse{\boolean{isarxiv}}{#2}{#1}}
\newcommand{\narxiv}[1]{\notarxiv{#1}}
\newcommand{\truncated}{curve-normalized}
\newcommand{\lovasz}{Lov\'asz}
\newtheorem{theorem}{Theorem}
\newtheorem{lemma}[theorem]{Lemma}
\newtheorem{corollary}[theorem]{Corollary}
\newcommand{\curvf}[1]{\ensuremath{\kappa_{#1}}}
\begin{document}

\title{A Unified Framework of Constrained Robust Submodular Optimization with Applications}

\author{\name Rishabh Iyer \email rishabh.iyer@utdallas.edu \\ \addr University of Texas at Dallas,
USA } 

\maketitle

\begin{abstract}
Robust optimization is becoming increasingly important in machine learning applications. In this paper, we study a unified framework of robust  submodular optimization. We study this problem both from a minimization and maximization perspective (previous work has only focused on variants of robust submodular maximization). We do this under a broad range of combinatorial constraints including cardinality, knapsack, matroid as well as graph based constraints such as cuts, paths, matchings and trees. Furthermore, we also study robust submodular minimization and maximization under multiple submodular upper and lower bound constraints.  We show that all these problems are motivated by important machine learning applications including robust data subset selection, robust co-operative cuts and robust co-operative matchings. In each case, we provide scalable approximation algorithms and also study hardness bounds. Finally, we empirically demonstrate the utility of our algorithms on synthetic data, and real world applications of robust cooperative matchings for image correspondence, robust data subset selection for speech recognition, and image collection summarization with multiple queries. \looseness-1
\end{abstract}

\section{Introduction}
\label{introduction}
Submodular functions provide a rich class of expressible models for a
variety of machine learning problems. They occur
naturally in two flavors. In minimization problems, they model
notions of cooperation, attractive potentials, and economies of scale,
while in maximization problems, they model aspects of coverage,
diversity, and information. A set function $f: 2^V \to \mathbb R$ over
a finite set $V = \{1, 2, \ldots, n\}$ is \emph{submodular} 
\cite{fujishige2005submodular} if for all
subsets $S, T \subseteq V$, it holds that 
\begin{align}
    f(S) + f(T) \geq f(S \cup
T) + f(S \cap T)
\end{align}
Given a set $S \subseteq V$, we define the
\emph{gain} of an element $j \notin S$ in the context $S$ as $f(j | S)
= f(S \cup j) - f(S)$. A perhaps more intuitive
characterization of submodularity is as follows:
a function $f$ is submodular if it satisfies
\emph{diminishing marginal returns}, namely 
$f(j | S) \geq f(j | T)$
for all $S \subseteq T, j \notin T$, and is \emph{monotone} if $f(j |
S) \geq 0$ for all $j \notin S, S \subseteq V$.

Two central optimization problems involving submodular functions are submodular minimization and submodular maximization. Moreover, it is often natural to want to optimize these functions subject to combinatorial constraints~\cite{nemhauser1978,rkiyersemiframework2013,jegelka2011-inference-gen-graph-cuts,goel2009optimal}. 

In this paper, we shall study the problem of robust submodular optimization. Often times in applications we want to optimize several objectives (or criteria) together. There are two natural formulations of this. One is the average case, where we can optimize the (weighted) sum of the submodular functions. Examples of this have been studied in data summarization applications~\cite{lin2012learning,tsciatchek14image,gygli2015video}. The other is robust or worst case, where we want to minimize (or maximize) the maximum (equivalently minimum) among the functions. Examples of this have been proposed for sensor placement and observation selection~\cite{krause08robust}. Robust or worst case optimization is becoming increasingly important since solutions achieved by minimization and maximization can be unstable to perturbations in data. Often times submodular functions in applications are instantiated from various properties of the data (features, similarity functions, clusterings etc.) and obtaining results which are robust to perturbations and variations in this data is critical.

Given monotone submodular functions $f_1, f_2, \cdots, f_l$ to minimize and $g_1, g_2, \cdots, g_k$ to maximize, consider the following problems:
\begin{align}
\label{mo-subopt}
\mbox{Problem 1: }\min_{X \in \mathcal C} \max_{i = 1:l} f_i(X), \,\,\,\,\,\,
\mbox{Problem 2: } \max_{X \in \mathcal C} \min_{i = 1:k} g_i(X) \nonumber
\end{align}
$\mathcal C$ stands for combinatorial constraints, which include cardinality, matroid, spanning trees, cuts, s-t paths etc. We shall call these problems \textsc{Robust-SubMin} and \textsc{Robust-SubMax}. Note that when $k = 1, l = 1$, we get back constrained submodular minimization and constrained submodular maximization. We will also study special cases of \textsc{Robust-SubMin} and \textsc{Robust-SubMax} when the constraints are defined via another submodular function. We study two problems: a) minimize the functions $f_i$'s while having lower bound constraints on the $g_i$'s (\textsc{Robust-scsc}), and b) maximize the functions $g_i$'s subject to upper bound constraints on $f_i$'s (\textsc{Robust-scsk}).  
\begin{align} 
    \mbox{Problem 3: } \min_{X \subseteq V}  \max_{i=1:l} f_i(X) 
    \,\, | \,\, g_i(X) \geq c_i, i = 1, \cdots k \nonumber \\
    \mbox{Problem 4: }\max_{X \subseteq V} \min_{i = 1:k} g_i(X)
    \,\, | \,\, f_i(X) \leq b_i, i = 1, \cdots l \nonumber
\end{align}
Problems 3 and 4 attempt to simultaneously minimize the functions $f_i$ while maximizing $g_i$. \arxiv{Finally, a natural extension of these problems is to have a joint average/worst case objective where we optimize $(1 - \lambda) \min_{i = 1:k} g_i(X) + \lambda/k \sum_{i = 1}^k g_i(X)$ and $(1 - \lambda) \max_{i = 1:l} f_i(X) + \lambda/l \sum_{i = 1}^l f_i(X)$ in Problems 1 - 4. We shall call these problems the \textsc{Mixed} versions (\textsc{Mixed-SubMax}, \textsc{Mixed-SubMin} etc.) However we point out that the \textsc{Mixed} case for Problems 1-4 is a special case of \textsc{Robust} optimization. Its easy to see that we can convert this into a \textsc{Robust} formulation by defining $f^{\prime}_i(X) = (1 - \lambda)f_i(X) + \lambda/l \sum_{i = 1}^l f_i(X)$ and $g^{\prime}_i(X) = (1 - \lambda)g_i(X) + \lambda/k \sum_{i = 1}^k g_i(X)$ and then defining the \textsc{Robust} optimization on $f^{\prime}_i$'s and $g^{\prime}_i$.}\looseness-1

\paragraph{Special Cases of Problems 3 and 4.} We get several special cases of \textsc{Robust-scsc} and \textsc{Robust-scsk}. When $l = k = 1$, Problems 3 and 4 generalize \textsc{Scsc} and \textsc{Scsk}~\cite{nipssubcons2013}. Similarly, when the functions $f_i$'s are modular, Problem 4 (\textsc{Robust-scsk}) becomes Robust submodular maximization (\textsc{Robust-SubMax}) under multiple knapsack constraints, which in turn generalizes \textsc{Robust-SubMax} subject to a single knapsack constraint (studied in~\cite{anari2019robust}), and single cardinality constraint (studied in~\cite{krause08robust}. When $f_i$'s are modular and $k = 1$, Problem 3 (\textsc{Robust-scsc}) becomes Robust Submodular Set Cover (\textsc{Robust-ssc}), and when $g_i$'s are modular, Problem 3 becomes robust submodular minimization (\textsc{Robust-SubMin}) under multiple knapsack constraints. Similarly, when $g_i$'s are modular, Problem 4 (\textsc{Robust-scsk}) becomes Robust Submodular Span (\textsc{Robust-ss}) problem.

\paragraph{Conference version of this paper:} This work significantly extends our conference paper~\cite{iyer2020robust}. The conference version just studied Problem 1, while in this work, we also study Problems 2, 3 and 4 and provide a unified perspective to the different instantiations of Robust Submodular Optimization. In particular, we provide algorithms for Problem 2 with multiple knapsack constraints, and introduce and study Problems 3 and 4 (which to the best of our knowledge are new problems) with multiple modular and submodular constraints. This paper also significantly extends upon the conference version from an empirical perspective, by considering new applications of robust data selection with vocabulary constraints, and query focused summarization with multiple queries/targets (see motivating applications below).

\subsection{Motivating Applications}
This section provides an overview of two specific applications which motivate \textsc{Robust-SubMin}. 
\begin{figure}
\begin{center}
\includegraphics[width=0.45\textwidth]{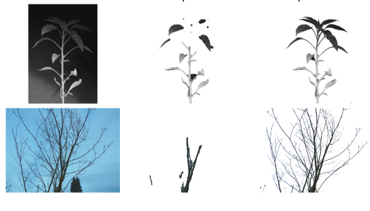}
~
\includegraphics[width=0.45\textwidth]{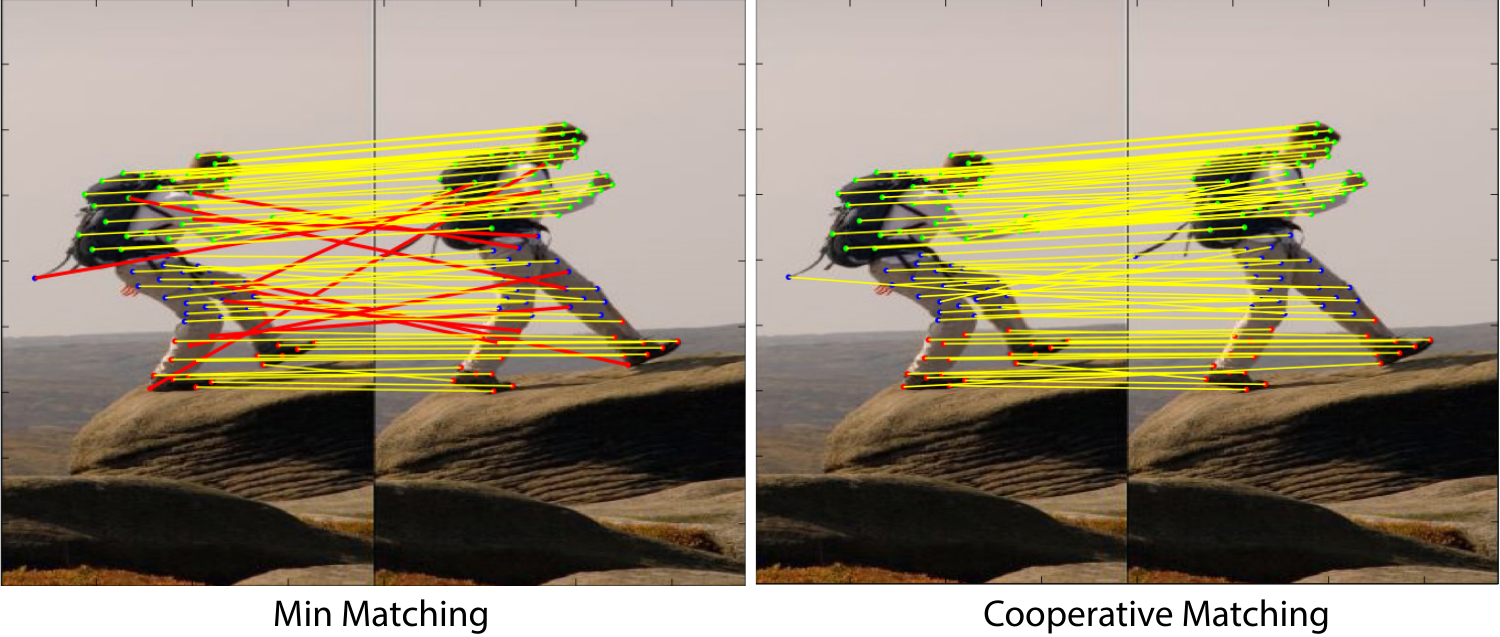}
\vspace{2ex}

\includegraphics[width=0.5\textwidth]{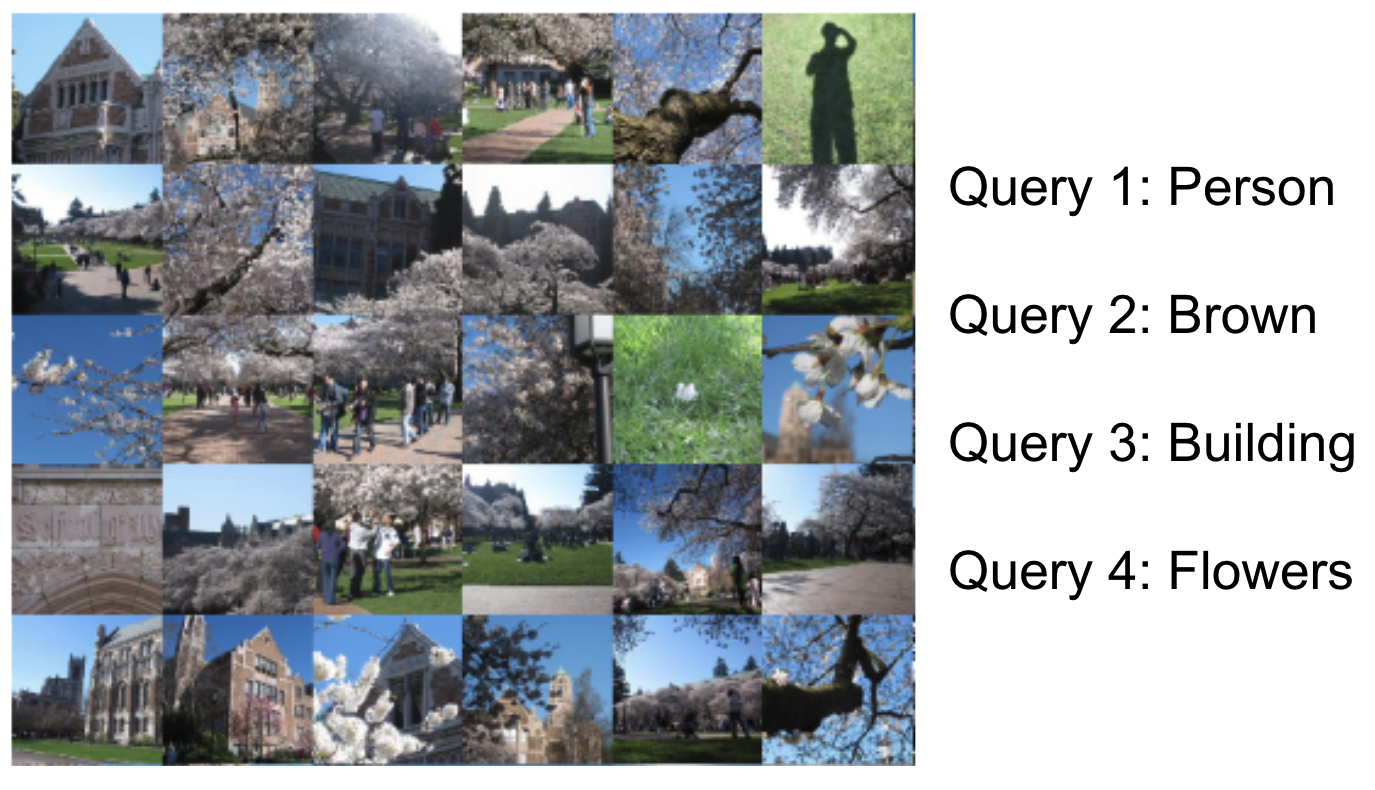}
\caption{
An illustration of (top left) co-operative cuts~\cite{jegelka2011-inference-gen-graph-cuts}, (top right) cooperative matchings~\cite{iyer2019near}, and (bottom) query focused summarization with multiple queries}
\end{center}
\label{fig:illustration}
\end{figure}

\paragraph{Robust Co-operative Cuts: } Markov random fields with pairwise attractive potentials occur naturally in modeling image segmentation and related applications~\cite{boykov2004experimental}. While models are tractably solved using graph-cuts, they suffer from the shrinking bias problem, and images with elongated edges are not segmented properly. When modeled via a submodular function, however, the cost of a cut is not just the sum of the edge weights, but a richer function that allows cooperation between edges, and yields superior results on many challenging tasks (see, for example, the results of the image segmentations in~\cite{jegelka2011-nonsubmod-vision}). This was achieved in~\cite{jegelka2011-nonsubmod-vision} by partitioning the set of edges $\mathcal E$ of the grid graph into groups of similar edges (or types) $\mathcal E_1, \cdots, \mathcal E_k$, and defining a function $f(S) = \sum_{i = 1}^k \psi_i(w(S \cap \mathcal E_i)), S \subseteq \mathcal E$, where $\psi_i$s are concave functions and $w$ encodes the edge potentials. This ensures that we offer a \emph{discount} to edges of the same type. The left image in Figure~1 illustrates co-operative cuts~\cite{jegelka2011-inference-gen-graph-cuts}. However, instead of taking a single clustering (i.e. a single group of edges $\mathcal E_1, \cdots, \mathcal E_k$), one can instantiate several clusterings  $\{(\mathcal E^1_1, \cdots, \mathcal E^1_k), \cdots, (\mathcal E^l_1, \cdots, \mathcal E^l_k)\}$ and define a robust objective: $f_{robust}(S) = \max_{i = 1:l} \sum_{j = 1}^k \psi_j(w(S \cap \mathcal E^i_j)), S \subseteq \mathcal E$. Minimizing $f_{robust}$ over the family of s-t cuts can achieve segmentations that are robust to many such groupings of edges thereby achieving \emph{robust co-operative segmentations}. We call this \emph{Robust Co-operative Cuts} and this becomes instance of \textsc{Robust-SubMin} with $f_{robust}$ defined above and the constraint $\mathcal C$ being the family of s-t cuts. Cooperative cuts is illustrated in Figure 1 (top-left).\looseness-1

\paragraph{Robust Co-operative Matchings: } The problem of matching key-points in images, also called image
correspondence, is an important problem in computer
vision~\cite{ogale2005shape}. The simplest model for this problem
constructs a matching with linear scores, i.e., a max bipartite
matching, called a \emph{linear
  assignment}. This model does not allow a representation of
interaction between the pixels, and as a result, we see many spurious matches in the results of min-matching (see top right group of plots in Figure 1). One way to address this is by handling interaction or cooperation between the keypoints~\cite{iyer2019near}, and one kind of desirable interaction is that similar or neighboring pixels be matched together. We can achieve this as follows. First we cluster the key-points in the two images into $k$ groups and the induced clustering of edges can be given a discount via a submodular function. Let $\{ V_i^{(1)} \}_{i=1}^k$ and $\{
V_i^{(2)} \}_{i=1}^k$ be the two sets of clusters.  We then compute
the linear assignment problem, letting $\mathcal M \subseteq \mathcal
E$ be the resulting maximum matching. We then partition the edge set $\mathcal
E = \mathcal E_1 \cup \mathcal E_2 \cup \dots \mathcal E_k \cup
\mathcal E'$ where $\mathcal E_i = \mathcal M \cap (V_\ell^{(1)}
\times V_s^{(2)} )$ for $\ell,s \in \{ 1, 2, \dots, k\}$ corresponding
to the $i$'th largest intersection, and $\mathcal E' = \{\mathcal E
\backslash \cup_{i = 1}^k \mathcal E_i\}$ are the remaining edges
either that were not matched or that did not lie within a frequently
associated pair of image key-point clusters. We then define a submodular function as follows: $f(S) = \sum_{i = 1}^k \psi_i(w(S \cap \mathcal E_i)) + 
w(S \cap \mathcal E')$, 
which provides an additional discount to the edges $\{\mathcal
E_i\}_{i = 1}^k$ corresponding to key-points that were frequently
associated in the initial pass. The difference between minimum matching and cooperative matching is shown in the top-right plots in Figure 1, and we show that through cooperation, we can effectively remove these spurious matchings. Similar to cooperative cuts, we can consider several clusterings, and try to solve a robust cooperative matching problem. To do this, we can consider several clusterings $\{(\mathcal E^1_1, \cdots, \mathcal E^1_k), \cdots, (\mathcal E^l_1, \cdots, \mathcal E^l_k)\}$, and define $f_i(S) = \sum_{j = 1}^k \psi_j(w(S \cap \mathcal E^i_j)) + 
w(S \cap \mathcal E^i{'})$. We then seem to minimize $f_{robust}(S) = \min_i f_i(S)$ subject to a matching constraint, and this becomes an instance of \textsc{Robust-SubMin}.\looseness-1

\paragraph{Robust Data Subset Selection: } Submodular functions have successfully been used for several data subset selection in domains such as image classification~\cite{kaushal2019learning}, speech recognition~\cite{wei2013using} and machine translation~\cite{kirchhoff2014-submodmt}. \cite{wei2015submodularity,killamsetty2020glister} prove that the problem of selecting the maximum likelihood subset of a training dataset is a submodular optimization problem for several classifiers and loss functions. Another approach, which we shall study in this paper, is selecting data-sets which are robust to several data subset selection models $g_1, \cdots, g_k$. These models can be defined via different submodular functions, different choice of features, perturbations in the feature space and different target distributions where we want to use these models. In this case, we can pose this as an instance of \textsc{Robust-SubMax} where we want to maximize the minimum among the utility functions Furthermore, it is also natural to select subsets of data which minimize the complexity of the dataset~\cite{liu2017svitchboard,lin2011optimal}. In this case, the functions $f$ captures the complexity of the data-sets $X$ (for example, vocabulary size in speech recognition and number of objects in object detection). This is naturally an instance of \textsc{Robust-SCSK} with $g_i$ being the data selection models, while $f$ is the complexity of the selected subset ($l = 1$). We can also define multiple complexity functions $f_i$, where each function is defined via different perturbations in the vocabulary (obtained by say randomly deleting a certain fraction of words from the vocabulary function).\looseness-1

\paragraph{Data Subset Selection and Data Summarization with Multiple Targets/Queries: } In many applications, it makes sense to select a subset (or summary) $X \subseteq V$, which is both diverse but also relevant to a given target or query $Q$~\cite{kaushal2020unified,iyer2020submodular,vasudevan2017query,li2012multi}. One way to frame such a problem is to maximize a diversity function $g(X)$ while simultaneously minimizing the conditional gain $h(X | Q)$~\cite{iyer2020submodular,kaushal2020unified}, where $h$ is an appropriate submodular function for the query relevance. Note that $h$ and $g$ could also be the same function. Two natural formulations of this are $\max_{X \subseteq V} g(X) \,\,\ | \,\,\ h(X | Q) \leq \epsilon$ or $\min_{X \subseteq V} h(X | Q) \,\,\ | \,\,\ g(X) \geq c$ -- these problems are instances of \textsc{Scsc} and \textsc{Scsk} respectively~\cite{nipssubcons2013}. A simple and realistic extension of this is where we have multiple query sets (equivalently targets) $Q_1, \cdots, Q_l$. Essentially this means we want to maximize $g(X)$ while minimizing $h(X | Q_i$ for $i = 1, \cdots, l$. In other words, we want to select a diverse summary while yet being \emph{equally} relevant to all the $l$ queries. Two natural formulations are: $\max_{X \subseteq V} g(X) \,\,\ | \,\,\ h(X | Q_i) \leq \epsilon_i, i = 1:l,$ and $\min_{X \subseteq V} \max_{i = 1:l} h(X | Q_i) \,\,\ | \,\,\ g(X) \geq c$. These are instances of \textsc{Robust-Scsk} and \textsc{Robust-Scsc} with $k = 1$ and $l$ the number of queries. In our experiments, we study the \textsc{Robust-Scsc} variant, and show that the robust approach ensures fairness among the queries (i.e. the summary is equally relevant of each query) much better compared to just taking a sum of the conditional functions ($\sum_{i = 1}^l h(X | Q_i)$ or the union conditional function ($ h(X | \cup_{i = 1}^l Q_i)$. The application of query focused summarization with multiple queries is illustrated in Figure 1 (right), where the goal is to select a summary which is relevant to all four queries simultaneously. Another related application of this is targeted data subset selection for machine learning, where the goal is to select among a set of unlabeled data points, a set of points related to (possibly multiple) targets~\cite{kaushal2021prism}. Examples of targets could be slices where the model is currently not performing well (e.g. in a self driving car application, the model may not be performing well to detect people at night, or vehicles in rainy and foggy conditions). It is also realistic in such scenarios that we have multiple targets/queries and the goal is to select a subset of unlabeled data while being relevant to all the target slices.

\subsection{Related Work}

\noindent \textbf{Submodular Minimization and Maximization: } Most Constrained forms of submodular optimization (minimization and maximization) are NP hard even when $f$ is monotone~\cite{nemhauser1978,wolsey1982analysis,goel2009optimal,jegelka2011-inference-gen-graph-cuts}. The greedy algorithm achieves a $1- 1/e$ approximation for cardinality constrained maximization and a $1/2$ approximation under matroid constraints~\cite{fisher1978analysis}. \cite{chekuri2011submodular} achieved a tight $1 - 1/e$ approximation for matroid constraints using the continuous greedy. \cite{kulik2009maximizing} later provided a similar $1 - 1/e$ approximation under multiple knapsack constraints. Constrained submodular minimization is much harder -- even with simple constraints such as a cardinality lower
bound constraints, the problems are not approximable better than a
polynomial factor of $\Omega(\sqrt{n} )$~\cite{svitkina2008submodular}. Similarly the problems of minimizing a submodular function under covering constraints~\cite{iwata2009submodular}, spanning trees, perfect matchings, paths~\cite{goel2009approximability} and cuts~\cite{jegelka2011-inference-gen-graph-cuts} have similar polynomial hardness factors. In all of these cases, matching upper bounds (i.e approximation algorithms) have been provided~\cite{svitkina2008submodular,iwata2009submodular,goel2009approximability,jegelka2011-inference-gen-graph-cuts}. In~\cite{rkiyersemiframework2013,curvaturemin}, the authors provide a scalable semi-gradient based framework and curvature based bounds which improve upon the worst case polynomial factors for functions with bounded curvature $\kappa_f$ (which several submodular functions occurring in real world applications have).\looseness-1 

\noindent \textbf{Robust Submodular Optimization: } Among the four problems we consider in this work, \textsc{Robust-SubMax} has been extensively studied in literature. One of the first papers to study robust submodular maximization was~\cite{krause08robust}, where the authors study \textsc{Robust-SubMax} with cardinality constraints. The authors reduce this problem to a submodular set cover problem using the \emph{saturate} trick to provide a bi-criteria approximation guarantee. \cite{anari2019robust} extend this work and study \textsc{Robust-SubMax} subject to matroid constraint. They provide bi-criteria algorithms by creating a union of $O(\log l/\epsilon)$ independent sets, with the union set having a guarantee of $1 - \epsilon$. They also discuss extensions to knapsack and multiple matroid constraints and provide bicriteria approximation of $(1 - \epsilon, O(\log l/\epsilon))$. \cite{powers2017constrained} also study the same problem. However, they take a different approach by presenting a bi-criteria algorithm that outputs a feasible set that is good only for a fraction of the $k$ submodular functions $g_i$. \cite{chen2017robust,wilder2017equilibrium} study a slightly general problem of robust non-convex optimization (of which robust submodular optimization is a special case), but they provide weaker guarantees compared to \cite{krause2008efficient,anari2019robust}. We introduced and studied \textsc{Robust-SubMin} in the conference version of this paper~\cite{iyer2020robust}, and \textsc{Robust-scsc} and \textsc{Robust-scsk} are new problems we study in this work. 

\noindent \textbf{Robust Min-Max Combinatorial Optimization: } From a minimization perspective, several researchers have studied robust min-max combinatorial optimization (a special case of \textsc{Robust-SubMin} with modular functions) under different combinatorial constraints (see~\cite{aissi2009min,kasperski2016robust} for a survey). Unfortunately these problems are NP hard even for constraints such as knapsack, s-t cuts, s-t paths, assignments and spanning trees where the standard linear cost problems are poly-time solvable~\cite{aissi2009min,kasperski2016robust}. Moreover, the lower bounds on hardness of approximation is $\Omega(log^{1-\epsilon} l)$ ($l$ is the number of functions) for s-t cuts, paths and assignments~\cite{kasperski2009approximability} and $\Omega(log^{1-\epsilon} n)$ for spanning trees~\cite{kasperski2011approximability} for any $\epsilon > 0$. For the case when $l$ is bounded (a constant), fully polynomial time approximation schemes have been proposed for a large class of constraints including s-t paths, knapsack, assignments and spanning trees~\cite{aissi2010general,aissi2009min,kasperski2016robust}. From an approximation algorithm perspective, the best known general result is an approximation factor of $l$ for constraints where the linear function can be exactly optimized in poly-time. For special cases such as spanning trees and shortest paths, one can achieve improved approximation factors of $O(\log n)$~\cite{kasperski2011approximability,kasperski2018approximating} and $\tilde{O}(\sqrt{n})$\footnote{Ignores $\log$ factors} ~\cite{kasperski2018approximating} respectively.\looseness-1

\subsection{Our Contributions}
While past work has mainly focused on \textsc{Robust-SubMax} (Problem 2), This paper is the first work to provide approximation bounds for \textsc{Robust-SubMin}, \textsc{Robust-SCSC}, and \textsc{Robust-SCSK}. Next, we enumerate our contribution for each of the four problems.
\begin{itemize}
    \item \textsc{Robust-SubMin: } We start by first providing the hardness results which follows from the hardness bounds of constrained submodular minimization and robust min-max combinatorial optimization. Next, we provide four families of algorithms for \textsc{Robust-SubMin}. The first and simplest approach approximates the $\max$ among the functions $f_1, \cdots, f_l$ in \textsc{Robust-SubMin} with the average. This in turn converts \textsc{Robust-SubMin} into a constrained submodular minimization problem. We show that solving the constrained submodular minimization with the average of the functions still yields an approximation factor for \textsc{Robust-SubMin} though it is worse compared to the hardness by a factor of $l$. While this approach is conceptually very simple, we do not expect this perform well in practice. We next study the Majorization-Minimization (\textsc{mmin}) family of algorithms, where we sequentially approximate the functions $f_1, \cdots, f_l$ with their modular upper bounds. Each resulting sub-problem involves solving a min-max combinatorial optimization problem and we use specialized solvers for the different constraints. The third algorithm is the Ellipsoidal Approximation (\textsc{ea}) where we replace the functions $f_i$'s with their ellipsoidal approximations. We show that this achieves the tightest approximation factor for several constraints, but is slower to the majorization-minimization approach in practice. The fourth technique is a continuous relaxation approach where we relax the discrete problem into a continuous one via the \lovasz{} extension. We show that the resulting robust problem becomes a convex optimization problem with an appropriate recast of the constraints. We then provide a rounding scheme thereby resulting in approximation factors for \textsc{Robust-SubMin}. Tables 1 and 2 show the hardness and the resulting approximation factors for several important combinatorial constraints. Along the way, we also provide a family of approximation algorithms and heuristics for \textsc{Robust-Min} (i.e. with modular functions $f_i$ under several combinatorial constraints). We finally compare the four algorithms for different constraints and in each case, discuss heuristics which will make the resulting algorithms practical and scalable for real world problems. 
    \item \textsc{Robust-SubMax: } For \textsc{Robust-SubMax}, we complement previous work by providing an approximation guarantee for multiple knapsack constraints. Previous works have only focused on cardinality constraints~\cite{krause08robust}, knapsack constraints and matroid constraints~\cite{anari2019robust}. 
    \item \textsc{Robust-SCSC} and \textsc{Robust-SCSK}: 
    We show that \textsc{Robust-SCSK} and \textsc{Robust-SCSC} are closely related in that given an approximation algorithm for one of the problems, we can achieve approximation algorithms for the other. Next, we provide a framework for approximation algorithms for these problems and in turn analyze the results for special case. Significantly, our results provide the first set of approximation algorithms for robust submodular maximization under multiple knapsack constraints, robust submodular minimization under single or multiple knapsack constraints and robust submodular set cover. Next, we provide a clear picture of the hardness results for these problems.
    \item Finally, we empirically show the performance of our algorithms on synthetic and real world datasets. We demonstrate the utility of our models in robust co-operative matching and show that our robust model outperforms simple co-operative model from~\cite{iyer2019near} for this task. We also demonstrate the utility of our algorithms for Robust SCSC and Robust SCSK, specifically for robust data subset selection.
\end{itemize}

\section{Preliminaries}
In this section, we will review some of the constructs and techniques used in this paper to provide approximation algorithms for \textsc{Robust-SubMin}.

\noindent \textbf{Curvature: } Given a submodular function $f$, the curvature~\cite{vondrak2010submodularity,conforti1984submodular,curvaturemin}: 
\begin{align*}
\kappa_f = 1 - \min_{j \in V} \frac{f(j | V \backslash j)}{f(j)}
\end{align*}
Thanks to submodularity, it is easy to see that $0 \leq \kappa_f \leq 1$. We define a quantity:
\begin{align}
    K(f, \kappa) = \frac{f}{1 + (1 - \kappa)(f - 1)}
\end{align}
where $\kappa$ is the curvature. Note that $K(f, \kappa)$ interpolates between $K(f, 1) = f$ and $K(f, 0) = 1$. This quantity shall repeatedly come up in the approximation and hardness bounds in this paper.

\noindent \textbf{The Submodular Polyhedron and \lovasz{} extension }
For a submodular function $f$, the submodular polyhedron $\mathcal P_f$ and the corresponding base polytope $\mathcal B_f$ are respectively defined as 
\begin{align}
    \mathcal P_f = \{ x : x(S) \leq f(S), \forall S \subseteq V \} 
\;\;\;
\mathcal B_f = \mathcal P_f \cap \{ x : x(V) = f(V) \}
\end{align}
For a vector $x \in \mathbb{R}^V$ and a set $X \subseteq V$, we write $x(X)
= \sum_{j \in X} x(j)$. Though $\mathcal P_f$ is defined via $2^n$ inequalities, its extreme point can be easily characterized~\cite{fujishige2005submodular,edmondspolyhedra}. Given any permutation $\sigma$ of the ground set $\{1, 2, \cdots, n\}$, and an associated chain $\emptyset = S^{\sigma}_0 \subseteq S^{\sigma}_1 \subseteq \cdots \subseteq S^{\sigma}_n = V$ with $S^{\sigma}_i =
\{ \sigma(1), \sigma(2), \dots, \sigma(i) \}$, a vector $h^f_{\sigma}$ satisfying,
\begin{align}
    h^f_{\sigma}(\sigma(i) = f(S^{\sigma}_i) - f(S^{\sigma}_{i-1}) = f(\sigma(i) | S^{\sigma}_{i-1}), \forall i = 1, \cdots, n
\end{align}
forms an extreme point of $\mathcal P_f$. Moreover, a natural convex extension of a submodular function, called the \lovasz{} extension~\cite{lovasz1983,edmondspolyhedra} is closely related to the submodular polyhedron, and is defined as $\hat{f}(x) = \max_{h \in \mathcal P_f} \langle h, x \rangle$. Thanks to the properties of the polyhedron, $\hat{f}(x)$ can be efficiently computed: Denote $\sigma_x$ as an ordering induced by $x$, such that $x(\sigma_x(1)) \geq x(\sigma_x(2)) \geq \cdots x(\sigma_x(n))$. Then the \lovasz{} extension is $\hat{f}(x) = \langle h^f_{\sigma_x}, x \rangle$~\cite{lovasz1983,edmondspolyhedra}. The gradient of the \lovasz{} extension $\nabla \hat{f}(x) = h^f_{\sigma_x}$.

\noindent \textbf{Modular lower bounds (Sub-gradients): } Akin to convex functions, submodular functions have tight modular lower bounds. These bounds are related to the sub-differential $\partial_f(Y)$ of the submodular set function $f$ at a set $Y \subseteq V$, which is defined 
\cite{fujishige2005submodular,edmondspolyhedra,iyer2015polyhedral}
as: 
\begin{align}
    \partial_f(Y) = \{y \in \mathbb{R}^n: f(X) - y(X) \geq f(Y) - y(Y),\; \text{for all } X \subseteq V\}
\end{align}
Denote a sub-gradient at $Y$ by $h_Y \in \partial_f(Y)$. Define $h_Y = h^f_{\sigma_Y}$ (see the definition of $h^f_{\sigma}$ from the previous paragraph) forms a lower bound of $f$, tight at $Y$ --- i.e.,
$h_Y(X) = \sum_{j \in X} h_Y(j) \leq f(X), \forall X
\subseteq V$ and $h_Y(Y) = f(Y)$. Notice that the extreme points of a sub-differential are a subset of the extreme points of the submodular polyhedron. 

\noindent \textbf{Modular upper bounds (Super-gradients): }
We can also define super-differentials $\partial^f(Y)$ of a submodular
function 
\cite{jegelka2011-nonsubmod-vision,rkiyersubmodBregman2012,rkiyersemiframework2013,iyer2015polyhedral}
at
$Y$: 
\begin{align}
    \partial^f(Y) = \{y \in \mathbb{R}^n: f(X) - y(X) \leq f(Y) - y(Y); \text{for all } X \subseteq V\}
\end{align}
It is possible, moreover, to provide specific super-gradients~\cite{rkiyersubmodBregman2012,rkiyersemiframework2013,iyer2015polyhedral,iyermirrordescent,iyer2015submodular} that define the following two modular upper bounds:
\begin{align}
m^f_{X, 1}(Y) \triangleq f(X) - \sum_{j \in X \backslash Y } f(j| X \backslash j) + \sum_{j \in Y \backslash X} f(j| \emptyset), \\
m^f_{X, 2}(Y) \triangleq f(X) - \sum_{j \in X \backslash Y } f(j| V \backslash j) + \sum_{j \in Y \backslash X} f(j| X).
\end{align}

Then $m^f_{X, 1}(Y) \geq f(Y)$ and $m^f_{X, 2}(Y) \geq f(Y), \forall Y \subseteq V$ and $m^f_{X, 1}(X) = m^f_{X, 2}(X) = f(X)$. Also note that $m^f_{\emptyset, 1}(Y) = m^f_{\emptyset, 2}(Y) = \sum_{j \in Y} f(j | \emptyset)$. For simplicity denote this as $m^f_{\emptyset}(X)$. Then the following result holds:
\begin{lemma}\cite{curvaturemin,rkiyersemiframework2013}
\label{curvaturemmin}
Given a submodular function $f$ with curvature $\kappa_f$, $f(X) \leq m^f_{\emptyset}(X) \leq K(|X|, \kappa_f) f(X)$
\end{lemma}
\noindent \textbf{Ellipsoidal Approximation: } Another generic approximation of a submodular function, introduced by Goemans et.\ al~\cite{goemans2009approximating}, is based on approximating the submodular polyhedron by an ellipsoid. The main result states that any polymatroid (monotone submodular) function $f$,
can be approximated by a function of the form $\sqrt{w^f(X)}$ for a
certain modular weight vector $w^f \in \mathbb R^V$, such that $\sqrt{w^f(X)} \leq f(X) \leq
O(\sqrt{n}\log{n}) \sqrt{w^f(X)}, \forall X \subseteq V$. A simple trick then provides a curvature-dependent approximation~\cite{curvaturemin} ---
we define the $\curvf{f}$-\emph{\truncated{}} version of $f$ as
follows: $f^{\kappa}(X) \triangleq \bigl[f(X) - {(1 - \curvf{f})} \sum_{j \in X} f(j)\bigr]/ \curvf{f}$. Define the function:
\begin{align}
    f^{\text{ea}}(X) = \curvf{f} \sqrt{w^{f^{\kappa}}(X)} + (1 -
  \curvf{f})\sum_{j \in X} f(j)
\end{align}

The following Lemma shows that $f^{\text{ea}}$ approximates $f$.
\begin{lemma}\label{curvatureea}
Given a submodular function $f$ with curvature $\kappa_f$, $f^{\text{ea}}$ satisfies $f^{\text{ea}}(X) \leq f(X) \leq K(\sqrt{n}, \kappa_f) f^{\text{ea}}(X), \forall X \subseteq V$.
\end{lemma}

\begin{table*}
       \begin{center}
       \tiny{
           \begin{tabular}{|l|c|c|c|c|c|c|}
  \hline
    Constraint & Hardness & \textsc{mmin-aa} & \textsc{ea-aa} & \textsc{mmin} & \textsc{ea} & CR
        \\ \hline
    Cardinality ($k$) & $K(\sqrt{n}, \kappa)$ & $l K(k, \kappa)$ & $l K(\sqrt{n}\log n, \kappa)$ & $O(\log l K(k, \kappa_{wc})/\log\log l)$ & $\tilde{O}(\sqrt{m \log l/\log \log l})$ & $n-k+1$ \\
    Trees & $M(K(n, \kappa), \log n)$ & $l K(n, \kappa)$ & $l K(\sqrt{m}\log m, \kappa)$ & $O(\min(\log n, l) K(n, \kappa_{wc}))$ &  $O(\min(\sqrt{\log n}, \sqrt{l}) \sqrt{m}\log m)$ & $m-n+1$ \\
    Matching & $M(K(n, \kappa), \log l)$ & $l K(n, \kappa)$ & $l K(\sqrt{m}\log m, \kappa)$ & $l K(n, \kappa)$ & $\tilde{O}(\sqrt{lm})$ & $n$ \\
    s-t Cuts & $M(K(\sqrt{m}, \kappa), \log l)$ & $l K(n, \kappa)$ & $l K(\sqrt{m}\log m, \kappa)$ & $l K(n, \kappa)$ & $\tilde{O}(\sqrt{lm})$ & $n$ \\
    s-t Paths & $M(K(\sqrt{m}, \kappa), \log l)$ & $l K(n, \kappa)$ & $l K(\sqrt{m}\log m, \kappa)$ & $O(\min(\sqrt{n}, l) K(n, \kappa_{wc}))$ & $O(\min(n^{0.25}, \sqrt{l})\sqrt{m}\log m)$ & $m$ \\
    Edge Cov. & $K(n, \kappa)$ & $l K(n, \kappa)$ & $l K(\sqrt{m}\log m, \kappa)$ & $l K(n, \kappa)$ & $\tilde{O}(\sqrt{lm})$ & $n$ \\
    Vertex Cov. & $2$ & $l K(n, \kappa)$ & $l K(\sqrt{n}\log n, \kappa)$ & $l K(n, \kappa)$ & $\tilde{O}(\sqrt{ln})$ & $2$ \\
   \hline
  \end{tabular}
    \label{tab:1}
    \caption{Approximation bounds and Hardness for \textsc{Robust-SubMin}. $M(.)$ stands for $\max(.)$}
    }
\end{center}
\end{table*}

\begin{table*}
\begin{center}
\scriptsize{
\begin{tabular}{|l|c|c|c|}
  \hline
    Constraint & Hardness & \textsc{mmin} & \textsc{ea}
        \\ \hline
    Knapsack & $K(\sqrt{n}, \kappa)$ & $K(n, \kappa)$ & $O(K(\sqrt{n}\log n, \kappa))$ \\
    Trees & $K(n, \kappa)$ & $K(n, \kappa)$ & $O(K(\sqrt{m}\log m, \kappa))$ \\
    Matchings & $K(n, \kappa)$ & $K(n, \kappa)$ & $O(K(\sqrt{m}\log m, \kappa))$ \\
    s-t Paths & $K(n, \kappa)$ & $K(n, \kappa)$ & $O(K(\sqrt{m}\log m, \kappa))$ \\
   \hline
  \end{tabular}}
    \label{tab:2}
    \caption{Approximation bounds and Hardness of in \textsc{Robust-SubMin} with $l$ constant}
\end{center}
\end{table*}

\section{\textsc{Robust-SubMin}}
In this section, we shall go over the hardness and approximation algorithms for \textsc{Robust-SubMin}. Recall that the \textsc{Robust-SubMin} optimization problem is:
\begin{align*}
    \mbox{Problem 1: } \min_{X \in \mathcal C} \max_{i = 1:l} f_i(X)
\end{align*}
We shall consider two cases, one where $l$ is bounded (i.e. its a constant), and the other where $l$ is unbounded. We start with some notation. We denote the graph as $G = (\mathcal V, \mathcal E)$ with $|\mathcal V| = n, |\mathcal E| = m$.  Depending on the problem at hand, the ground set $V$ can either be the set of edges ($V = \mathcal E$) or the set of vertices ($V = \mathcal V$). The groundset is the set of edges in the case of trees, matchings, cuts, paths and edge covers, while in the case of vertex covers, they are defined on the vertices. $\mathcal C$ is the combinatorial constraints which enforces that the set of edges/vertices is either an s-t cut, s-t path, vertex cover, edge cover, spanning tree, or matching etc. Tables 1 and 2 show the hardness and the approximation factors for several important combinatorial constraints and settings.

\subsection{Hardness of \textsc{Robust-SubMin}} 
Since \textsc{Robust-SubMin} generalizes robust min-max combinatorial optimization (when the functions are modular), we have the hardness bounds from~\cite{kasperski2009approximability,kasperski2011approximability}. For the modular case, the lower bounds are $\Omega(log^{1-\epsilon} l)$ ($l$ is the number of functions) for s-t cuts, paths and assignments~\cite{kasperski2009approximability} and $\Omega(log^{1-\epsilon} n)$ for spanning trees~\cite{kasperski2011approximability} for any $\epsilon > 0$. These hold unless $NP \subseteq DTIME(n^{poly \log n})$\cite{kasperski2009approximability,kasperski2011approximability}. Moreover, since \textsc{Robust-SubMin} also generalizes constrained submodular minimization, we get the curvature based hardness bounds from~\cite{curvaturemin,goel2009optimal,jegelka2011-inference-gen-graph-cuts}. The hardness results are in the first column of Table~1. The curvature $\kappa$ corresponds to the worst curvature among the functions $f_i$ (i.e. $\kappa = \max_i \kappa_i$). 

\subsection{Algorithms with Modular functions $f_i$'s}
In this section, we shall study approximation algorithms for \textsc{Robust-SubMin} when the functions $f_i$'s are modular, i.e. $f_i(X) = \sum_{j \in X} f_i(j)$. We call this problem \textsc{Robust-Min}. Most previous work~\cite{aissi2010general,aissi2009min,kasperski2016robust} has focused on fully polynomial approximation schemes when $l$ is small. These algorithms are exponential in $l$ and do not scale beyond $l = 3$ or $4$. Instead, we shall study approximation algorithms for this problem. Define two simple approximations of the function $F(X) = \max_i f_i(X)$ when $f_i$'s are modular. The first is $\hat{F}(X) = \sum_{i \in X} \max_{j = 1:l} f_j(i)$ and the second is $\tilde{F}(X) = 1/l \sum_{i = 1}^l \sum_{j \in X} f_i(j) = 1/l \sum_{i = 1}^l f_i(X)$. 

 \begin{lemma}\label{mod-approximations}
 Given $f_i(j) \geq 0$, it holds that $\hat{F}(X) \geq F(X) \geq \frac{1}{l} \hat{F}(X)$. Furthermore, $\tilde{F}(X) \leq F(X) \leq l\tilde{F}(X)$. Given a $\beta$-approximate algorithm for optimizing linear cost functions over the constraint $\mathcal C$, we can achieve an $l\beta$-approximation algorithm for \textsc{Robust-Min}.\looseness-1
 \end{lemma}
 \begin{proof}
We first prove the second part.  Its easy to see that $F(X) = \max_i f_i(X) \leq \sum_i f_i(X) = l\tilde{F}(X)$ which is the second inequality. The first inequality follows from the fact that $\sum_i f_i(X) \leq l \max_i f_i(X)$. To prove the first result, we start with proving $F(X) \leq \hat{F}(X)$. For a given set $X$, let $i_X$ be the index which maximizes $F$ so $F(X) = \sum_{j \in X} f_{i_X}(j)$. Then $f_{i_X}(j) \leq \max_i f_i(j)$ from which we get the result. Next, observe that $\hat{F}(X) \leq \sum_{i = 1}^l f_i(X) = l\tilde{F}(X) \leq lF(X)$ which follows from the inequality corresponding to $\tilde{F}$. 

Now given a $\beta$-approximation algorithm for optimizing linear cost functions over the constraint $\mathcal C$, denote $\tilde{X}$ by optimizing $\tilde{F}(X)$ over $\mathcal C$. Also denote $X^*$ as the optimal solution by optimizing $F$ over $\mathcal C$ and $\tilde{X^*}$ be the optimal solution
for optimizing $\tilde{F}$ over $\mathcal C$. Since $\tilde{X}$ is a $\beta$ approximation for $\tilde{F}$ over $\mathcal C$, $\tilde{F}(\tilde{X}) \leq \beta \tilde{F}(\tilde{X^*}) \leq \beta \tilde{F}(X^*)$. The last inequality holds since $X^*$ is feasible (i.e. it belongs to $\mathcal C$) so it must hold that $\tilde{F}(X^*) \geq \tilde{F}(\tilde{X^*})$.
A symmetric argument applies to $\hat{F}$. In particular, $F(\hat{X}) \leq \hat{F}(\hat{X}) \leq \beta \hat{F}(X^*) \leq \beta l F(X^*)$. The inequality $\hat{F}(\hat{X}) \leq \beta \hat{F}(X^*)$ again holds due to an argument similar to the $\tilde{F}$ case.
 \end{proof}
 
 Note that for most constraints $\mathcal C$, $\beta$ is $1$. In practice, we can take the better of the two solutions obtained from the two approximations above.

Next, we shall look at higher order approximations of $F$. Define the function $F_a(X) = (\sum_{i = 1}^l [f_i(X)]^a)^\frac{1}{a}$. Its easy to see that $F_a(X)$ comes close to $F$ as $a$ becomes large. 
\begin{lemma}
Define $F_a(X) = (\sum_{i = 1}^l [f_i(X)]^a)^\frac{1}{a}$. Then it holds that $F(X) \leq F_a(X) \leq l^\frac{1}{a} F(X)$. Moreover, given a $\beta$-approximation algorithm for optimizing $F_a(X)$ over $\mathcal C$, we can obtain a $\beta l^\frac{1}{a}$-approximation for \textsc{Robust-Min}.
\end{lemma}
\arxiv{
\begin{proof}
Its easy to see that $F(X) \leq F_a(X)$ since $[F(X)]^a \leq \sum_{i = 1}^l [f_i(X)]^a$. Next, note that $\forall i, f_i(X) \leq F(X)$ and hence $\sum_{i = 1}^l [f_i(X)]^a \leq l [F(X)]^a$ which implies that $F_a(X) \leq k^{\frac{1}{a}} F(X)$ which proves the result. The second part follows from arguments similar to the Proof of Lemma 2.
\end{proof}
}

Note that optimizing $F_a(X)$ is equivalent to optimizing $\sum_{i = 1}^l [f_i(X)]^a$ which is a higher order polynomial function. For example, when $a = 2$ we get the quadratic combinatorial program~\cite{buchheim2018quadratic} which can possibly result in a a $\sqrt{l}$ approximation given a exact or approximate quadratic optimizer. Unfortunately, optimizing this problem is NP hard for most constraints when $a \geq 2$. However, several heuristics and efficient solvers~\cite{benson1999mixed,lawler1963quadratic,buchheim2018quadratic,loiola2007survey} exist for solving this in the quadratic setting. Moreover, for special cases such as the assignment (matching) problem, specialized algorithms such as the Graduated Assignment algorithm~\cite{gold1996graduated} exist for solving this. While these are not guaranteed to theoretically achieve the optimal solution, they work quite in practice, thus yielding a family of heuristics for \textsc{Robust-Min}.

\subsection{Average Approximation based Algorithms} 
In this section, we shall look at a simple approximation of $\max_i f_i(X)$, which in turn shall lead to an approximation algorithm for \textsc{Robust-SubMin}. We first show that $f_{avg}(X) = 1/l \sum_{i = 1}^l f_i(X)$ is an $l$-approximation of $\max_{i = 1:l} f_i(X)$, which implies that minimizing $f_{avg}(X)$ implies an approximation for \textsc{Robust-SubMin}. The results in this Section are summarized in Table 1 (specifically, the \textsc{mmin-aa} and \textsc{ea-aa} columns of Table 1).
\begin{theorem}\label{robust-smin-aa}
Given a non-negative set function $f$, define $f_{avg}(X) = \frac{1}{l}  \sum_{i = 1}^l f_i(X)$. Then $f_{avg}(X) \leq \max_{i = 1:l} f_i(X) \leq lf_{avg}(X)$. Denote $\hat{X}$ as $\beta$-approximate optimizer of $f_{avg}$. Then $\max_{i = 1:l} f_i(\hat{X}) \leq l\beta \max_{i = 1:l} f_i(X^*)$ where $X^*$ is the exact minimizer of \textsc{Robust-SubMin}.
\end{theorem}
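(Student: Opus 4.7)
The plan is to first verify the two-sided sandwich inequality $f_{avg}(X)\leq \max_{i=1:l} f_i(X)\leq l\, f_{avg}(X)$ for every $X\subseteq V$, and then use it to chain four inequalities that yield the approximation guarantee.

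For the sandwich, the lower bound is just the elementary fact that the average of any $l$ real numbers is at most their maximum, applied pointwise to $f_1(X),\dots,f_l(X)$; no submodularity is needed. The upper bound is where we actually use non-negativity of each $f_i$: since all $f_i(X)\geq 0$, we have $\max_i f_i(X)\leq \sum_{i=1}^{l} f_i(X) = l\, f_{avg}(X)$. These are one-line arguments but worth stating cleanly because they get invoked twice in the second half.

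For the approximation guarantee, let $X^*_{avg}$ denote an exact minimizer of $f_{avg}$ over $\mathcal{C}$ and let $\hat{X}$ be a $\beta$-approximate minimizer, so $f_{avg}(\hat{X})\leq \beta f_{avg}(X^*_{avg})$. I would write the chain
\begin{align*}
\max_{i=1:l} f_i(\hat{X}) \;\leq\; l\, f_{avg}(\hat{X}) \;\leq\; l\beta\, f_{avg}(X^*_{avg}) \;\leq\; l\beta\, f_{avg}(X^*) \;\leq\; l\beta \max_{i=1:l} f_i(X^*),
\end{align*}
where the first step is the upper half of the sandwich applied at $\hat{X}$, the second is the $\beta$-approximation property, the third uses that $X^*_{avg}$ minimizes $f_{avg}$ over $\mathcal{C}$ so it beats the robust optimizer $X^*\in\mathcal{C}$ on the average objective, and the fourth is the lower half of the sandwich applied at $X^*$.

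There is no real obstacle here; the only subtle point is to be careful that the $\beta$-approximation is with respect to $f_{avg}$ (not with respect to the robust objective), and that the reduction loses exactly a factor of $l$ because the sandwich is tight up to $l$ in the worst case (e.g.\ when only one $f_i$ is nonzero). I would end with a brief remark that this bound is meaningful only when $l$ is small, which motivates the sharper algorithms (MMin, EA, CR) developed in the remainder of the section.
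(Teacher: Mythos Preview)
Your proof is correct and follows essentially the same approach as the paper: establish the sandwich $f_{avg}(X)\leq \max_i f_i(X)\leq l\,f_{avg}(X)$ via the trivial average-vs-max bound and non-negativity, then chain the inequalities. The only cosmetic difference is that you make the intermediate step through $X^*_{avg}$ explicit, whereas the paper collapses $f_{avg}(\hat{X})\leq \beta f_{avg}(X^*_{avg})\leq \beta f_{avg}(X^*)$ into a single line.
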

\arxiv{
 \begin{proof}
 To prove the first part, notice that $f_i(X) \leq \max_{i = 1:l} f_i(X)$, and hence $1/l \sum_i f_i(X) \leq \max_{i = 1:l} f_i(X)$. The other inequality also directly follows since the $f_i$'s are non-negative and hence $\max_{i = 1:l} f_i(X) \leq \sum_i f_i(X) = l f_{avg}(X)$. To prove the second part, observe that $\max_i f_i(\hat{X}) \leq lf_{avg}(\hat{X}) \leq l\beta f_{avg}(X^*) \leq l\beta \max_i f_i(X^*)$. The first inequality holds from the first part of this result, the second inequality holds since $\hat{X}$ is a $\beta$-approximate optimizer of $f_{avg}$ and the third part of the theorem holds again from the first part of this result.
 \end{proof}}
 
 Since $f_{avg}$ is a submodular function, we can use the majorization-minimization (which we call \textsc{mmin-aa}) and ellipsoidal approximation (\textsc{ea-aa}) for constrained submodular minimization~\cite{rkiyersemiframework2013,curvaturemin,goel2009optimal,jegelka2011-inference-gen-graph-cuts}. The following corollary provides the approximation guarantee of \textsc{mmin-aa} and \textsc{ea-aa} for \textsc{Robust-SubMin}.
 \begin{corollary}
 Using the majorization minimization (\textsc{mmin}) scheme with the average approximation achieves an approximation guarantee of $l K(|X^*|, \kappa_{avg})$ where $X^*$ is the optimal solution of \textsc{Robust-SubMin} and $\kappa_{avg}$ is the curvature of $f_{avg}$. Using the curvature-normalized ellipsoidal approximation algorithm from~\cite{curvaturemin,goemans2009approximating} achieves a guarantee of $O(lK(|\mathcal V|\log |\mathcal V|, \kappa_{avg}))$
 \end{corollary}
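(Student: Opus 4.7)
The plan is to read the corollary as a direct composition of Theorem 1 with two known single-function guarantees for constrained submodular minimization, applied to $f_{avg} = \frac{1}{l}\sum_i f_i$. The key observation is that $f_{avg}$ is itself monotone submodular (a non-negative combination of monotone submodular functions), so any off-the-shelf approximation algorithm for constrained submodular minimization can be run on $f_{avg}$, and its $\beta$-approximation for that single function translates through Theorem 1 into an $l\beta$-approximation for \textsc{Robust-SubMin}.

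For the MMin part, the plan is to invoke the curvature-normalized semi-gradient bound of~\cite{rkiyersemiframework2013, curvaturemin}: running MMin on $f_{avg}$ under the combinatorial constraint $\mathcal{C}$ produces a feasible set $\hat{X}$ with $f_{avg}(\hat{X}) \leq K(|X^*_{avg}|, \kappa_{avg})\, f_{avg}(X^*_{avg})$, where $X^*_{avg}$ denotes the constrained minimizer of $f_{avg}$. Since the Robust-SubMin optimum $X^*$ is also feasible, $f_{avg}(X^*_{avg}) \leq f_{avg}(X^*)$, so $\hat{X}$ is a $\beta$-approximate minimizer of $f_{avg}$ benchmarked against $X^*$ with $\beta = K(|X^*_{avg}|, \kappa_{avg})$. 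Using the monotonicity of $K$ in its first argument together with the trivial bound $|X^*_{avg}| \leq |X^*|$ when the constraint forces the optimizers to have a common size (e.g.\ spanning trees, matchings, cuts) or the uniform bound $|X^*_{avg}| \leq |\mathcal{V}|$ in general, we can express $\beta$ as $K(|X^*|, \kappa_{avg})$ for the purposes of the statement. Plugging this $\beta$ into Theorem 1 yields the claimed $l K(|X^*|, \kappa_{avg})$ guarantee.

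The EA step follows the same template but substitutes the curvature-normalized ellipsoidal approximation of~\cite{curvaturemin, goemans2009approximating}, which provides a $\beta = O(K(|\mathcal{V}|\log|\mathcal{V}|, \kappa_{avg}))$-approximation for $f_{avg}$ under $\mathcal{C}$; combining with Theorem 1 finishes the second claim.

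The only non-routine step is the brief bookkeeping needed to express the cardinality argument of $K(\cdot, \kappa_{avg})$ in terms of the Robust-SubMin optimum $X^*$ rather than the $f_{avg}$-optimum $X^*_{avg}$; this is purely a consequence of the monotonicity of $K$ in its first argument together with the worst-case upper bound $|\mathcal{V}|$, and no deeper argument is required. I expect no genuine obstacle: the proof is essentially the substitution of two known $\beta$ values into the $l\beta$ reduction already established in Theorem 1.
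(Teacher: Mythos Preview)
Your approach is exactly the paper's: it states in one line that the corollary ``directly follows by combining the approximation guarantee of MMin and EA~\cite{curvaturemin, rkiyersemiframework2013} with Theorem 1,'' which is precisely your composition of the single-function $\beta$ with the $l\beta$ reduction.

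One minor simplification: the bookkeeping you flag about $|X^*_{avg}|$ versus $|X^*|$ is unnecessary, and your inequality $|X^*_{avg}| \leq |X^*|$ is not generally valid. The MMin supergradient bound from~\cite{curvaturemin} actually compares $\hat X$ against \emph{any} feasible $Y$, i.e.\ $f_{avg}(\hat X) \le m^{f_{avg}}_{\emptyset}(\hat X) \le m^{f_{avg}}_{\emptyset}(Y) \le K(|Y|,\kappa_{avg})\, f_{avg}(Y)$, so you may take $Y=X^*$ directly and obtain $\beta = K(|X^*|,\kappa_{avg})$ without ever passing through $X^*_{avg}$.
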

This corollary directly follows by combining the approximation guarantee of \textsc{mmin} and \textsc{ea}~\cite{curvaturemin,rkiyersemiframework2013} with Theorem~\ref{robust-smin-aa}. Substituting the values of $|\mathcal V|$ and $|X^*|$ for various constraints, we get the results in Table~1. While the average case approximation method provides a bounded approximation guarantee for \textsc{Robust-SubMin}, it defeats the purpose of the robust formulation. Moreover, the approximation factor is worse by a factor of $l$. Below, we shall study some techniques which directly try to optimize the robust formulation.
 
 \subsection{Majorization-Minimization Algorithm} The Majorization-Minimization algorithm is a sequential procedure which uses upper bounds of the submodular functions defined via supergradients. Starting with $X^0 = \emptyset$, the algorithm proceeds as follows. At iteration $t$, it constructs modular upper bounds for each function $f_i$, $m^{f_i}_{X^t}$ which is tight at $X^t$. We can use either one of the two modular upper bounds defined in Section 2. The set $X^{t+1} = \mbox{argmin}_{X \in \mathcal C} \max_i m^{f_i}_{X^t}(X)$. This is a min-max robust optimization problem. The following theorem provides the approximation guarantee for \textsc{mmin}.\looseness-1
 \begin{theorem} \label{thm3}
 If $l$ is a constant, \textsc{mmin} achieves an approximation guarantee of $(1 + \epsilon)K(|X^*|, \kappa_{wc})$ for the knapsack, spanning trees, matching and s-t path problems. The complexity of this algorithm is exponential in $l$. When $l$ is unbounded, \textsc{mmin} achieves an approximation guarantee of $lK(|X^*|, \kappa_{wc})$. For spanning trees and shortest path constraints, \textsc{mmin} achieves a \\ $O(\min(\log n, l) K(n, \kappa_{wc}))$ and a $O(\min(\sqrt{n}, l) K(n, \kappa_{wc}))$ approximation. Under cardinality and partition matroid constraints, \textsc{mmin} achieves a $O(\log l K(n, \kappa_{wc})/\log\log l)$ approximation.
 \end{theorem}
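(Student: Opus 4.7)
The plan is to analyze one step of MMin at (approximate) convergence by combining three ingredients: (a) the tightness and upper-bound properties of the supergradients, (b) the approximation factor $\alpha$ of the inner robust min-max modular subproblem, and (c) the curvature-based estimate on modular upper bounds from single-function MMin. Specifically, I would use the properties $m^{f_i}_{X^t}(Y) \geq f_i(Y)$ for all $Y \subseteq \mathcal V$ and $m^{f_i}_{X^t}(X^t) = f_i(X^t)$. Let $\hat X$ be the MMin iterate at termination; since $\max_i f_i(X^t)$ is monotone non-increasing on the finite ground set, termination occurs in finitely many steps. At the (approximate) fixed point $\hat X$, tightness together with the $\alpha$-approximate optimality of $\hat X$ for the inner subproblem gives
\[
\max_i f_i(\hat X) \;=\; \max_i m^{f_i}_{\hat X}(\hat X) \;\leq\; \alpha \cdot \max_i m^{f_i}_{\hat X}(X^*),
\]
where $X^*$ is an optimum of \textsc{Robust-SubMin}.

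Next, I would invoke the single-function curvature bound on modular upper bounds from Iyer-Jegelka-Bilmes, namely $m^{f_i}_{\hat X}(X^*) \leq K(|X^*|, \kappa_{f_i}) f_i(X^*)$ for the MMin-chosen supergradient. Since $K(\cdot, \kappa)$ is monotone in $\kappa$ and $\kappa_{f_i} \leq \kappa_{wc}$ for every $i$, this yields $\max_i m^{f_i}_{\hat X}(X^*) \leq K(|X^*|, \kappa_{wc}) \max_i f_i(X^*)$, and chaining with the previous display gives $\max_i f_i(\hat X) \leq \alpha \cdot K(|X^*|, \kappa_{wc}) \max_i f_i(X^*)$. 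The theorem then reduces to pinning down $\alpha$ per constraint class.

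Finally, I would plug in known approximation factors for the inner robust min-max modular problem. For constant $l$, the Aissi-Bazgan-Vanderpooten FPTAS for knapsack, spanning trees, matchings, and s-t paths gives $\alpha = 1+\epsilon$ at cost exponential in $l$. For unbounded $l$, the elementary $\max_i w_i(X) \leq \sum_i w_i(X) \leq l\max_i w_i(X)$ gives $\alpha = l$ whenever $\mathcal C$ admits a polytime modular minimization oracle. The Kasperski-Zielinski bounds of $O(\log n)$ for spanning trees and $\tilde O(\sqrt n)$ for s-t paths yield the stated $\min(\log n, l)$ and $\min(\sqrt n, l)$ factors by taking the better of the two approaches. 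The $O(\log l / \log\log l)$ factor for cardinality and partition matroid constraints follows from LP-rounding for min-max modular optimization on a partition matroid. The hardest part will be the curvature step: the single-function bound relies on a specific supergradient construction together with $\hat X$ being the MMin fixed point, so one has to verify that applying the same construction simultaneously for every $i$ at the common iterate $\hat X$ certifies $K(|X^*|, \kappa_{f_i})$ for each $f_i$ individually. This goes through because the supergradients are defined independently per function and tightness holds at $\hat X$ for every $i$, so the per-function curvature bound transfers cleanly under the outer $\max_i$ without incurring any additional loss.
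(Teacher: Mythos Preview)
There is a genuine gap in the curvature step. The bound $m^{f_i}_{\hat X}(X^*) \leq K(|X^*|,\kappa_{f_i})\,f_i(X^*)$ that you invoke is established in the Iyer--Jegelka--Bilmes curvature work \emph{only} for the supergradient based at $\emptyset$, i.e.\ $m^{f_i}_{\emptyset}(Y)=\sum_{j\in Y}f_i(j)$. It does not hold for supergradients based at an arbitrary iterate $\hat X$. A concrete counterexample: take $f(X)=\sqrt{|X|}$ on $V=\{1,2,3\}$ and $\hat X=\{1,2\}$; then $m^f_{\hat X,1}(\{3\})=\sqrt{2}-2(\sqrt{2}-1)+1=3-\sqrt{2}\approx 1.586$, while $f(\{3\})=1$ and $K(1,\kappa_f)=1$, so the required inequality fails. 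Your closing justification (``tightness holds at $\hat X$ for every $i$, so the per-function curvature bound transfers'') addresses the wrong issue: the problem is not passing the bound through $\max_i$, it is that the per-function bound itself is false at $\hat X\neq\emptyset$.

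The paper sidesteps this entirely by analyzing the \emph{first} iteration rather than a fixed point. It initializes $X^0=\emptyset$, so the inner subproblem at step one uses the supergradients $m^{f_i}_{\emptyset}$, for which the curvature bound $m^{f_i}_{\emptyset}(X)\leq K(|X|,\kappa_{wc})f_i(X)$ is known. If $\hat X_1$ is an $\alpha$-approximate solution of $\min_{X\in\mathcal C}\max_i m^{f_i}_{\emptyset}(X)$, then
\[
\max_i f_i(\hat X_1)\;\leq\;\max_i m^{f_i}_{\emptyset}(\hat X_1)\;\leq\;\alpha\,\max_i m^{f_i}_{\emptyset}(X^*)\;\leq\;\alpha\,K(|X^*|,\kappa_{wc})\max_i f_i(X^*),
\]
which already gives the claimed factor after one step; further iterations are run only if they improve the objective. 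Your instantiation of $\alpha$ per constraint class (FPTAS for constant $l$; the $l$-approximation via $w_{avg}$ or $w_{modmax}$; the Kasperski--Zieli\'nski LP bounds for trees, paths, cardinality, partition matroids) is correct and matches the paper exactly---the only fix needed is to anchor the analysis at $X^0=\emptyset$ rather than at a fixed point.
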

 
 Substituting the appropriate bounds on $|X^*|$ for the various constraints, we get the results in Tables 1 and 2. $\kappa_{wc}$ corresponds to the worst case curvature $\max_i \kappa_{f_i}$. 
 
 We now elaborate on the Majorization-Minimization algorithm. 
  At every round of the majorization-minimization algorithm we need to solve 
 \begin{align} \label{mmin-subproblem}
 X^{t+1} = \mbox{argmin}_{X \in \mathcal C} \max_i m^{f_i}_{X^t}(X).
 \end{align}
 We consider three cases. The first is when $l$ is a constant. In that case, we can use an FPTAS to solve Eq.~\eqref{mmin-subproblem}~\cite{aissi2009min,aissi2010general}. We can obtain a $1+\epsilon$ approximation with complexities of $O(n^{l+1}/\epsilon^{l-1})$ for shortest paths, $O(mn^{l+4}/\epsilon^l \log{n/\epsilon})$ for spanning trees, $O(mn^{l+4}/\epsilon^l \log{n/\epsilon})$ for matchings and  $O(n^{l+1}/\epsilon^l)$ for knapsack~\cite{aissi2010general}. The results for constant $l$ is shown in Table~2 (column corresponding to \textsc{mmin}). The second case is a generic algorithm when $l$ is not constant. Note that we cannot use the FPTAS since they are all exponential in $l$.  In this case, at every iteration of \textsc{mmin} can use the framework of approximations discussed in Section 3.2, and choose the solution with a better objective value. In particular, if we use the two modular bounds of the $\max$ function (i.e. the average the the max bounds), we can obtain $l$-approximations of $\max_i m^{f_i}_{X^t}$. One can also use the quadratic and cubic bounds which can possibly provide $\sqrt{l}$ or $l^{0.3}$ bounds. While these higher order bounds are still theoretically NP hard, there exist practically efficient solvers for various constraints (for e.g. quadratic assignment model for matchings~\cite{loiola2007survey}). Finally, for the special cases of spanning trees, shortest paths, cardinality and partition matroid constraints, there exist LP relaxation based algorithms which achieve approximation factors of $O(\log n)$, $O(\sqrt{n\log l/\log\log l}$, $O(\log l/\log \log l)$ and $O(\log l/\log \log l)$ respectively. The approximation guarantees of \textsc{mmin} for unbounded $l$ is shown in Table 1.

  \arxiv{
We now prove Theorem~\ref{thm3}.
 \begin{proof}
 Assume we have an $\alpha$ approximation algorithm for solving problem~\eqref{mmin-subproblem}.
 We start \textsc{mmin} with $X^0 = \emptyset$. We prove the bound for \textsc{mmin} for the first iteration. Observe that  $m^{f_i}_{\emptyset}(X)$ approximate the submodular functions $f_i(X)$ up to a factor of $K(|X|, \kappa_i)$~\cite{curvaturemin}. If $\kappa_{wc}$ is the maximum curvature among the functions $f_i$, this means that $m^{f_i}_{\emptyset}(X)$ approximate the submodular functions $f_i(X)$ up to a factor of $K(|X|, \kappa_{wc})$ as well. Hence $\max_i m^{f_i}_{\emptyset}(X)$ approximates $\max_i f_i(X)$ with a factor of $K(|X|, \kappa_{wc})$. In other words, 
 \begin{align}
 \max_i f_i(X) \leq \max_i m^{f_i}_{\emptyset}(X) \leq K(|X|, \kappa_{wc}) \max_i f_i(X)    
 \end{align}
 Let $\hat{X_1}$ be the solution obtained by optimizing  $m^{f_i}_{\emptyset}$ (using an  $\alpha$ approximation algorithms for the three cases described above). It holds that $\max_i m^{f_i}_{\emptyset}(\hat{X}_1) \leq \alpha \max_i m^{f_i}_{\emptyset}(X^m_1)$ where $X^m_1$ is the optimal solution of $\max_i m^{f_i}_{\emptyset}(X)$ over the constraint $\mathcal C$. Furthermore, denote $X^*$ as the optimal solution of $\max_i f_i(X)$ over $\mathcal C$. Then $\max_i m^{f_i}_{\emptyset}(X^m_1) \leq \max_i m^{f_i}_{\emptyset}(X^*) \leq K(|X^*|, \kappa_{wc})\max_i f_i(X^*)$. Combining both, we see that:
 \begin{align}
 \max_i f(\hat{X}_1) \leq \max_i m^{f_i}_{\emptyset}(\hat{X}_1) \leq \alpha K(|X^*|, \kappa_{wc})\max_i f_i(X^*)    
 \end{align}
 We then run \textsc{mmin} for more iterations and only continue if the objective value increases in the next round. Using the values of $\alpha$ for the different cases above, we get the results.
 \end{proof}
 }

\begin{table*}
\begin{center}
\scriptsize{
\begin{tabular}{ | c |  c |  }
\hline
 Constraints & $\hat{\mathcal P_{\mathcal C}}$  \\ \hline
 Matroids (includes Spanning Trees, Cardinality) & $\{x \in [0, 1]^n, x(S) \geq
r_{\mathcal M}(V) - r_{\mathcal M}(V \backslash S), \forall S
\subseteq V\}$ \\ \hline
 Set Covers (includes Vertex Covers and Edge Covers) & $\{x \in [0, 1]^{|\mathcal
  S|} \mid \sum_{i: u \in S_i} x(i) \geq c_u, \forall u \in \mathcal U\}$ \\ \hline
  s-t Paths & $\{x \in [0, 1]^{|\mathcal E|} \mid \sum_{e \in C}x(e) \geq 1$ , for every
s-t cut $C \subseteq \mathcal E\}$ \\ \hline
s-t Cuts & $\{x \in [0, 1]^{|\mathcal E|} \mid \sum_{e \in P}x(e) \geq 1$,
for every s-t path $P \subseteq \mathcal E\}$ \\ \hline
\end{tabular}}
\end{center}
\caption{The Extended Polytope $\hat{\mathcal P_{\mathcal C}}$ for many of the combinatorial constraints discussed in this paper. See~\cite{iyer2014monotone} for more details.}
\end{table*}

\subsection{Ellipsoidal Approximation Based Algorithm} 
Next, we use the Ellipsoidal Approximation to approximate the submodular function $f_i$. To account for the curvature of the individual functions $f_i$'s, we use the curve-normalized Ellipsoidal Approximation~\cite{curvaturemin}. We then obtain the functions $\hat{f}_i(X)$ which are of the form $(1 - \kappa_{f_i})\sqrt{w_{f_i}(X)} + \kappa_{f_i}\sum_{j \in X} f_i(j)$, and the problem is then to optimize $\max_i \hat{f}_i(X)$ subject to the constraints $\mathcal C$. This is no longer a min-max optimization problem. The following result shows that we can still achieve approximation guarantees in this case.
\begin{theorem}
For the case when $l$ is a constant, \textsc{ea} achieves an approximation guarantee of \\ $O(K(\sqrt{|\mathcal V|\log |\mathcal V|}, \kappa_{wc}))$ for the knapsack, spanning trees, matching and s-t path problems. The complexity of this algorithm is exponential in $l$. When $l$ is unbounded, the \textsc{ea} algorithm achieves an approximation guarantee of $O(\sqrt{l}\sqrt{|\mathcal V|} \log |\mathcal V|)$ for all constraints. In the case of spanning trees, shortest paths, the \textsc{ea} achieves approximation factors of $O(\min(\sqrt{\log n}, \sqrt{l}) \sqrt{m}\log m)$, and $O(\min(n^{0.25}, \sqrt{l})\sqrt{m}\log m)$. Under cardinality and partition matroid constraints, \textsc{ea} achieves a $O(\sqrt{\log l/\log\log l} \sqrt{n}\log n)$ approximation.
\end{theorem}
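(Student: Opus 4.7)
The plan is to mirror the structure of the \textsc{MMin} analysis but using the curvature-normalized ellipsoidal surrogates $\hat{f}_i(X) = (1-\kappa_i)\sqrt{w_{f_i}(X)} + \kappa_i \sum_{j \in X} f_i(j)$ in place of the modular upper bounds. First I would record the two-sided sandwich $\hat{f}_i(X) \leq f_i(X) \leq O(K(\sqrt{|\mathcal V|}\log|\mathcal V|,\kappa_i))\hat{f}_i(X)$ from Goemans et al.\ and \cite{curvaturemin}, so that $\max_i \hat{f}_i$ and $\max_i f_i$ agree up to the EA factor with parameter $\kappa_{wc}$. The remaining task, for each regime, is therefore to (approximately) optimize $\min_{X\in\mathcal C}\max_i \hat{f}_i(X)$, and multiplying the two losses will yield the stated bounds.

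For unbounded $l$, I would pass through the sum and then use Cauchy--Schwarz on the square-root terms. Specifically, $\max_i \hat{f}_i(X) \leq \sum_i \hat{f}_i(X) = (1-\kappa_{wc})\sum_i\sqrt{w_{f_i}(X)} + \kappa_{wc}\sum_i m_i(X)$, and $\sum_i\sqrt{w_{f_i}(X)} \leq \sqrt{l\,W(X)}$ with $W=\sum_i w_{f_i}$. This collapses the problem to minimizing a single function of the form $\sqrt{W(X)}$ plus a modular term over $\mathcal C$, which reduces (monotonically in $\sqrt{\cdot}$) to a single modular minimization that is polynomially solvable for knapsack, paths, trees, matchings, cuts, and covers. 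Comparing the value of the returned $\hat X$ against $\max_i \hat f_i(X^*) \geq \frac{1}{l}\sum_i \hat f_i(X^*) \geq \frac{1}{l}(1-\kappa_{wc})\sqrt{W(X^*)}+\cdots$ and tracking the Cauchy--Schwarz loss produces the $\sqrt{l}$ slack, giving the claimed $O(\sqrt{l}\sqrt{|\mathcal V|}\log|\mathcal V|)$ (after folding in the EA factor with $\kappa_{wc}$).

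For constant $l$, the min--max problem over the $l$ surrogates $\hat{f}_i$ admits an FPTAS by the same enumeration-over-profiles technique as in \cite{aissi2010general}: discretize each $\hat f_i$'s range into $O(1/\epsilon)$ levels, and for each of the $O((1/\epsilon)^l)$ guessed upper-bound vectors $(t_1,\dots,t_l)$, test feasibility of $\hat f_i(X)\le t_i$ for all $i$ in $\mathcal C$; after squaring, the constraints $\sqrt{w_{f_i}(X)}\le t_i'$ become linear in the modular weights, which plugs into the existing multi-objective linear FPTAS for paths, trees, matchings, and knapsack. Composing the resulting $1+\epsilon$ factor with the EA factor $O(K(\sqrt{|\mathcal V|}\log|\mathcal V|,\kappa_{wc}))$ gives the bounded-$l$ row in Table~2. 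For the specialized improvements on spanning trees, shortest paths, and cardinality/partition matroid constraints, I would invoke the LP-relaxation bounds of \cite{kasperski2011approximability,kasperski2018approximating} directly on the modular surrogates $w_{f_i}$ (after reducing $\hat f_i$ optimization to that of $w_{f_i}$ as above), which yields factors $O(\sqrt{\log n})$, $O(n^{1/4})$, and $O(\sqrt{\log l/\log\log l})$ respectively in place of the generic $\sqrt l$; one then keeps the better of the two, explaining the $\min(\cdot,\sqrt l)$ forms.

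The main obstacle is handling the non-linearity of the square-root term in the surrogate when $l>1$: unlike the \textsc{MMin} case where each iterate is purely modular, the constraints here mix $\sqrt{w_{f_i}(X)}$ with a modular piece, so the $l$-objective FPTAS and the LP-relaxation arguments cannot be applied verbatim. The fix is the reduction described above (Cauchy--Schwarz in the unbounded case, squaring and level-set enumeration in the bounded case) that in both regimes converts the problem to one whose constraint structure is linear in modular weights, which is where the existing min--max combinatorial machinery applies.
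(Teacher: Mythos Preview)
Your treatment of the bounded-$l$ case is in the same spirit as the paper's: both reduce $\min_{X\in\mathcal C}\max_i\hat f_i(X)$ to a multi-objective problem over the $2l$ modular weights $(w_{f_1},m_1,\dots,w_{f_l},m_l)$ and invoke an FPTAS. The paper does this cleanly by checking that $h(y)=\max_i(\sqrt{y^i_1}+y^i_2)$ is monotone and satisfies $h(\lambda y)\le\lambda h(y)$, then appealing to Mittal--Schulz; your discretize-and-enumerate sketch is morally the same, though your ``squaring'' step is loose because $\hat f_i$ contains both a $\sqrt{w_{f_i}(X)}$ and a modular piece, so $\hat f_i(X)\le t_i$ does not become linear after squaring unless you guess both components separately.

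The real gap is in the unbounded-$l$ analysis. Your chain gives an upper bound $\max_i\hat f_i(\hat X)\le\sum_i\hat f_i(\hat X)\le\sqrt{l}\sqrt{W(\hat X)}$ and a lower bound $\max_i\hat f_i(X^*)\ge\frac{1}{l}\sum_i\hat f_i(X^*)\ge\frac{1}{l}\sqrt{W(X^*)}$; the ratio is $l^{3/2}$, not $\sqrt{l}$. The step ``tracking the Cauchy--Schwarz loss produces the $\sqrt{l}$ slack'' is simply incorrect: passing through the sum already costs you a full factor of $l$, and Cauchy--Schwarz does not recover it. The paper's mechanism is entirely different. It drops the curvature-normalization in this regime and uses the plain surrogates $\sqrt{w_{f_i}(X)}$. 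The crucial (and elementary) observation is that $\max_i\sqrt{w_{f_i}(X)}=\sqrt{\max_i w_{f_i}(X)}$, since $\sqrt{\cdot}$ is monotone. Hence minimizing $\max_i\sqrt{w_{f_i}(X)}$ is \emph{equivalent} to minimizing the linear min--max objective $\max_i w_{f_i}(X)$, and any $\alpha$-approximation for the latter becomes a $\sqrt{\alpha}$-approximation for the former. Plugging in $\alpha=l$ (via the avg/modmax reductions) yields the $\sqrt{l}$ factor, and plugging in the LP-relaxation factors $\alpha=O(\log n)$, $\tilde O(\sqrt{n})$, $O(\log l/\log\log l)$ for trees, paths, and cardinality/partition matroids yields the $\sqrt{\log n}$, $n^{1/4}$, and $\sqrt{\log l/\log\log l}$ terms. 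Without this ``square the objective, approximate, take the square root back'' trick, none of the $\sqrt{\cdot}$ factors in the theorem are explained; your Cauchy--Schwarz route cannot produce them.
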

For the case when $l$ is bounded, we reduce the optimization problem after the Ellipsoidal Approximation into a multi-objective optimization problem, which provides an FPTAS for knapsack, spanning trees, matching and s-t path problems~\cite{papadimitriou2000approximability,mittal2013general}. When $l$ is unbounded, we further reduce the \textsc{ea} approximation objective into a linear objective which then provides the approximation guarantees similar to \textsc{mmin} above. However, as a result, we loose the curvature based guarantee. \narxiv{A detailed proof is in the extended version of this paper. }
\arxiv{
\begin{proof}
First we start with the case when $l$ is a constant. Observe that the optimization problem is 
\begin{align*}
\min_{X \in \mathcal C} \max_i \hat{f}_i(X) = \min_{X \in \mathcal C} \max_i (1 - \kappa_{f_i})\sqrt{w_{f_i}(X)} + \\ \kappa_{f_i}\sum_{j \in X} f_i(j)
\end{align*}
This is of the form $\min_{X \in \mathcal C} \max_i \sqrt{w^i_1(X)} + w^i_2(X)$. Define: 
\begin{align}
h(y^1_1, y^1_2, y^2_2, y^2_2, \cdots, y^l_1, y^l_2) = \max_i \sqrt{y^i_1} + y^i_2    
\end{align}
Note that the optimization problem is $\min_{X \in \mathcal C} h(w^1_1(X), w^1_2(X), \cdots, w^l_1(X), w^l_2(X))$. Observe that $h(\mathbf{y}) \leq h(\mathbf{y^{\prime}})$ if $\mathbf{y} \leq \mathbf{y^{\prime}}$. Furthermore, note that $\mathbf{y} \geq 0$. Then given a $\lambda > 1$, $h(\lambda \mathbf{y}) = \max_i \sqrt{\lambda y^i_1} + \lambda y^i_2 \leq \lambda \sqrt{y^i_1} + \lambda y^i_2 \leq \lambda h(\mathbf{y})$. As a result, we can use Theorem 3.3 from~\cite{mittal2013general} which provides an FPTAS as long as the following exact problem can be solved on $\mathcal C$: Given a constant $C$ and a vector $c \in \mathbf{R}^n$, does there exist a $x$ such that $\langle c, x \rangle = C$? A number of constraints including matchings, knapsacks, s-t paths and spanning trees satisfy this~\cite{papadimitriou2000approximability}. For these constraints, we can obtain a $1 + \epsilon$ approximation algorithm in complexity exponential in $l$. 

When $l$ is unbounded, we directly use the Ellipsoidal Approximation and the problem then is to optimize $\min_{X \in \mathcal C} \max_i \sqrt{w_{f_i}(X)}$. We then transform this to the following optimization problem: \\ $\min_{X \in \mathcal C} \max_i w_{f_i}(X)$. Assume we can obtain an $\alpha$ approximation to the problem \\ $\min_{X \in \mathcal C} \max_i w_{f_i}(X)$. This means we can achieve a solution $\hat{X}$ such that:
\begin{align}
 \max_i w_{f_i}(\hat{X}) \leq \alpha \max_i w_{f_i}(X^{ea})   
\end{align}
where $X^{ea}$ is the optimal solution for the problem $\min_{X \in \mathcal C} \max_i w_{f_i}(X)$. Then observe that $\max_i \sqrt{w_{f_i}(X^{ea})} \leq \max_i \sqrt{w_{f_i}(X^*)} \leq \max_i f_i(X^*)$. Combining all the inequalities and also using the bound of the Ellipsoidal Approximation, we have $\max_i f_i(\hat{X}) \leq \beta \max_i \sqrt{w_{f_i}(\hat{X})} \leq \beta \sqrt{\alpha} \max_i \sqrt{w_{f_i}(X^{ea})} \leq \beta \sqrt{\alpha} \max_i \sqrt{w_{f_i}(X^*)} \leq \beta \sqrt{\alpha} \max_i f_i(X^*)$ where $\beta$ is the approximation of the Ellipsoidal Approximation.

We now use this result to prove the theorem. Consider two cases. First, we optimize the \emph{avg} and \emph{max} versions of $\max_i w_{f_i}(X)$ which provide $\alpha = l$ approximation. Secondly, for the special cases of spanning trees, shortest paths, cardinality and partition matroid constraints, there exist LP relaxation based algorithms which achieve approximation factors $\alpha$ being $O(\log n)$, $O(\sqrt{n\log l/\log\log l})$, $O(\log l/\log \log l)$ and $O(\log l/\log \log l)$ respectively. Substitute these values of $\alpha$ and using the fact that $\beta = O(\sqrt{|V|}\log |V|)$, we get the approximation bound.
\end{proof}
}

\subsection{Continuous Relaxation Algorithm}
Here, we use the continuous relaxation of a submodular function. In particular, we use the relaxation $\max_i \hat{f_i}(x), x \in [0, 1]^{|\mathcal V|}$ as the continuous relaxation of the original function $\max_i f_i(X)$ (here $\hat{f}$ is the \lovasz{} extension). Its easy to see that this is a continuous relaxation. Since the \lovasz{} extension is convex, the function $\max_i \hat{f_i}(x)$ is also a convex function. This means that we can exactly optimize the continuous relaxation over a convex polytope. The remaining question is about the rounding and the resulting approximation guarantee due to the rounding. Given a constraint $\mathcal C$, define the up-monotone polytope similar to~\cite{iyer2014monotone} $\hat{\mathcal P_{\mathcal C}} = \mathcal P_{\mathcal C} + [0,1]^{|\mathcal V|}$. We then use the observation from~\cite{iyer2014monotone} that all the constraints considered in Table 1 can be expressed as:
\begin{align}
\hat{\mathcal P_{\mathcal C}} = \{x \in [0,1]^n | \sum_{i \in W} x_i \geq b_W \mbox{, for all $W \in \mathcal W$}\}. 
\end{align}

We then round the solution using the following rounding scheme. Given a continuous vector $\hat{x}$ (which is the optimizer of $\max_i \hat{f_i}(x), x \in \hat{\mathcal P_{\mathcal C}}$, order the elements based on $\sigma_{\hat{x}}$. Denote $X_i = [\sigma_{\hat{x}}[1], \cdots, \sigma_{\hat{x}}[i]]$ so we obtain a chain of sets $\emptyset \subseteq X_1 \subseteq X_2 \subseteq \cdots \subset X_n$. Our rounding scheme picks the smallest $k$ such that $X_k \in \hat{\mathcal C}$. Another way of checking this is if there exists a set $X \subset X_k$ such that $X \in \mathcal C$. Since $\hat{\mathcal C}$ is up-monotone, such a set must exist.  The following result shows the approximation guarantee.
\begin{theorem}
Given submodular functions $f_i$ and constraints $\mathcal C$ which can be expressed as $\{x \in [0,1]^n | \sum_{i \in W} x_i \geq b_W$ for all $W \in \mathcal W$\} for a family of sets $\mathcal W = \{W_1, \cdots\}$, the continuous relaxation scheme achieves an approximation guarantee of $\max_{W \in \mathcal W} |W| - b_W + 1$. If we assume the sets in $\mathcal W$ are disjoint, the integrality bounds matches the approximation bounds. 
\end{theorem}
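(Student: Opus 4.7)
My plan is to exploit two facts: (i) the integral form of the \lovasz{} extension to connect $f_i(X_k)$ directly to $\hat f_i(\hat x)$, and (ii) the feasibility of $\hat x$ in $\hat{\mathcal P}_{\mathcal C}$ to lower bound the critical coordinate $\hat x_{\sigma_{\hat x}(k)}$, where $k$ is the rounding index. Throughout I use monotonicity of each $f_i$ crucially.

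First I would verify that the relaxation is a valid lower bound. Since $X^*$ is feasible for the original discrete problem, $\sum_{i\in W}\mathbf 1_{X^*}(i)=|X^*\cap W|\geq b_W$ for every $W$, so $\mathbf 1_{X^*}\in \hat{\mathcal P}_{\mathcal C}$; combined with $\hat f_i(\mathbf 1_{X^*})=f_i(X^*)$ and optimality of $\hat x$, this yields $\max_i \hat f_i(\hat x)\le \max_i f_i(X^*)$. Next, using the level-set representation $\hat f_i(\hat x)=\int_0^1 f_i(\{j:\hat x_j\ge t\})\,dt$, I observe that for every $t\le \hat x_{\sigma_{\hat x}(k)}$ the super-level set contains $X_k$, so monotonicity of $f_i$ gives $\hat f_i(\hat x)\ge \hat x_{\sigma_{\hat x}(k)} f_i(X_k)$. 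Taking the max over $i$ gives $\max_i f_i(X_k)\le \max_i \hat f_i(\hat x)/\hat x_{\sigma_{\hat x}(k)}$.

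The heart of the argument, and the main obstacle, is to lower bound $\hat x_{\sigma_{\hat x}(k)}$. By minimality of $k$, the set $X_{k-1}\notin \hat{\mathcal C}$, so some $W\in\mathcal W$ is violated, i.e.\ $|W\cap X_{k-1}|\le b_W-1$ (treating $b_W$ as integral, which covers the combinatorial cases in Table~1). Applying $\hat x_i\le 1$ on $W\cap X_{k-1}$ and $\hat x_i\le \hat x_{\sigma_{\hat x}(k)}$ on $W\setminus X_{k-1}$ in the feasibility inequality $\sum_{i\in W}\hat x_i\ge b_W$ yields $|W\setminus X_{k-1}|\,\hat x_{\sigma_{\hat x}(k)}\ge b_W-|W\cap X_{k-1}|$. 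Writing $p=|W\cap X_{k-1}|$, the ratio $(b_W-p)/(|W|-p)$ has derivative with the sign of $b_W-|W|$, hence is monotone in $p$; minimizing over $p\in\{0,\dots,b_W-1\}$ puts the extremum at $p=b_W-1$, giving $\hat x_{\sigma_{\hat x}(k)}\ge 1/(|W|-b_W+1)\ge 1/\max_{W'\in\mathcal W}(|W'|-b_{W'}+1)$. Chaining the three bounds produces $\max_i f_i(X_k)\le (\max_W|W|-b_W+1)\max_i f_i(X^*)$; monotonicity lets us replace $X_k$ by any $X\subseteq X_k$ with $X\in\mathcal C$ that the algorithm actually returns.

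For the integrality-gap matching when the $W$'s are disjoint, I would exhibit a canonical tight instance. Take a single $W$ with $b_W=1$ and the monotone submodular truncated-cardinality function $f(X)=\min(|X\cap W|,1)$, whose \lovasz{} extension is $\max_{i\in W} x_i$; the LP optimum at $\hat x_i=1/|W|$ equals $1/|W|$, while every integer feasible set intersects $W$ and so has objective $1$, giving gap $|W|=|W|-b_W+1$. Disjointness is used to replicate this construction independently across $W$'s (so one element never satisfies more than one constraint and no cancellation across $W$'s shrinks the gap); for general $b_W$ the same template with $f(X)=\min(|X\cap W|,b_W)$ and $\hat x_i=b_W/|W|$ reproduces the factor. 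Once the reduction $p=|W\cap X_{k-1}|$ in step~(iii) is identified and the monotonicity of $(b_W-p)/(|W|-p)$ observed, the remaining work is bookkeeping.
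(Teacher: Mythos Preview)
Your approximation-bound argument is correct and is essentially the paper's approach: both threshold-round the continuous optimizer $\hat x$ and use $\theta\, f_i(X_\theta)\le \hat f_i(\hat x)$ (you obtain it via the level-set integral form of the \lovasz{} extension and monotonicity of $f_i$; the paper uses $\theta\,1_{X_\theta}\le \hat x$, monotonicity of $\hat f_i$, and positive homogeneity---equivalent statements). Where the paper simply cites \cite{iyer2014monotone} for the lower bound $\theta\ge 1/(\max_W|W|-b_W+1)$, you supply a direct combinatorial argument, and your minimization of $(b_W-p)/(|W|-p)$ over $p\le b_W-1$ is correct. So for the first assertion of the theorem you are at least as complete as the paper.

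Your integrality-gap construction, however, fails for $b_W>1$. With $f(X)=\min(|X\cap W|,b_W)$ the \lovasz{} extension is the sum of the $b_W$ largest coordinates in $W$, so at $\hat x_i=b_W/|W|$ its value is $b_W^2/|W|$; the integer optimum is $b_W$, giving gap $|W|/b_W$, not $|W|-b_W+1$. A construction that does match: take $W'\subseteq W$ with $|W'|=|W|-b_W+1$ and $f(X)=\min(|X\cap W'|,1)$. Any integer-feasible $X$ must intersect $W'$ (since $|W\setminus W'|=b_W-1<b_W$), so the integer optimum is $1$; the fractional point with $x_i=1$ on $W\setminus W'$ and $x_i=1/|W'|$ on $W'$ is feasible with objective $1/(|W|-b_W+1)$. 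Disjointness of the $W$'s then lets you replicate this per constraint exactly as you described.
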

\begin{proof}
The proof of this theorem is closely in line with Lemma 2 and Theorem 1 from~\cite{iyer2014monotone}. We first show the first part. Given monotone submodular functions $f_i, i \in 1, \cdots, l$, and an optimizer $\hat{x}$ of $\max_i \hat{f}(x)$, define $\hat{X_{\theta}} = \{i: \hat{x}_i \geq \theta\}$. We choose $\theta$ such that $\hat{X_{\theta}} \in \mathcal C$. Then  $\max_i f_i(\hat{X_{\theta}}) \leq 1/\theta \max_i f_i(X^*)$ where $X^*$ is the optimizer of $\min_{X \in \mathcal C} \max_i f_i(X)$. To prove this, observe that, by definition $\theta 1_{\hat{X_{\theta}}} \leq \hat{x}$\footnote{$1_A$ is the indicator vector of set $A$ such that $1_A[i] = 1$ if and only if $i\in A$.}. As a result, $\forall i, \hat{f_i}(\theta 1_{\hat{X_{\theta}}}) = \theta f_i(\hat{X_{\theta}}) \leq \hat{f_i}(\hat{x})$ -- this follows because of the positive homogeneity of the \lovasz{} extension. This implies that $\theta \max_i f_i(\hat{X_{\theta}}) \leq \hat{f_i}(\hat{x}) \leq \min_{x \in \mathcal P_{\mathcal C}} \hat{f_i}(\hat{x}) \leq \min_{X \in \mathcal C} \max_i f_i(X)$. The last inequality holds from the fact that the discrete solution is greater than the continuous one since the continuous one is a relaxation. This proves this part of the theorem.

Next, we show that the approximation guarantee holds for the class of constraints defined as $\{x \in [0,1]^n | \sum_{i \in W} x_i \geq b_W\}$. From these constraints, note that for every $W \in \mathcal W$, at least $b_W \leq |W|$ elements need to ``covered''.  Consequently, to round a vector $x \in \hat{\mathcal P_{\mathcal C}}$, it is sufficient to choose $\theta = \min_{W \in \mathcal W} x_{[b_W, W]}$ as the rounding threshold, where $x_{[k, A]}$ denotes the $k^{\mbox{th}}$ largest entry of $x$ in a set $A$. 
The worst case scenario is that the $b_W - 1$ entries of $x$ with indices in the set $W$ are all $1$, and the remaining mass of $1$ is equally distributed over the remaining elements in $W$. In this case, the value of $x_{[b_W, W]}$ is $1/(|W| - b_W + 1)$. 
Since the constraint requires $\sum_{i \in W} x_i \geq b_W$, it must hold that $x_{[b_W, W]} \geq 1/(|W| - b_W + 1)$. Combining this with the first part of the result proves this theorem.
\end{proof}

We can then obtain the approximation guarantees for different constraints including cardinality, spanning trees, matroids, set covers, edge covers and vertex covers, matchings, cuts and paths by appropriately defining the polytopes $\mathcal P_{\mathcal C}$ and appropriately setting the values of $\mathcal W$ and $\max_{W \in \mathcal W} |W| - b_W + 1$. Combining this with the appropriate definitions (shown in Table 3), we get the approximation bounds in Table 1.

\subsection{Analysis of the Bounds for Various Constraints} 
Given the bounds in Tables 1 and 2, we discuss the tightness of these bounds viz-a-via the hardness. In the case when $l$ is a constant, \textsc{mmin} achieves tight bounds for Trees, Matchings and Paths while the \textsc{ea} achieves tight bounds up to $\log$ factors for knapsack constraints. In the case when $l$ is not a constant, \textsc{mmin} achieves a tight bound up to $\log$ factors for spanning tree constraints. The continuous relaxation scheme obtains tight bounds in the case of vertex covers. In the case when the functions $f_i$ have curvature $\kappa = 1$, CR also obtains tight bounds for edge-covers and matchings. We also point out that the bounds of average approximation (AA) depend on the average case curvature as opposed to the worst case curvature. However, in practice, the functions $f_i$ often belong to the same class of functions in which case all the functions $f_i$ have the same curvature.

\section{\textsc{Robust-SubMax}}
In this section, we will study \textsc{Robust-SubMax}. Recall that the \textsc{Robust-SubMax} optimization problem is:
\begin{align*}
    \mbox{Problem 2: } \max_{X \in \mathcal C} \min_{i = 1:k} g_i(X)
\end{align*}

\cite{krause08robust} show that \textsc{Robust-SubMax} is inapproximable (in a single criteria manner) upto any polynomial factor unless P = NP even with cardinality constraints. This is in contrast to \textsc{Robust-SubMin}, which achieves bounded approximation factors (Section 3). \cite{krause08robust} provide a bi-criteria approximation factor for \textsc{Robust-SubMax} under a single cardinality constraint. \cite{anari2019robust} extend this to matroid and knapsack constraints. \cite{krause08robust,anari2019robust} provide a bi-criteria approximation factor of $(1 - \epsilon, O(\log k/\epsilon))$ for cardinality constraints.  For Matroid constraints, they provide bi-criteria algorithms by creating a union of $O(\log k/\epsilon)$ independent sets, with the union set having a guarantee of $1 - \epsilon$. This result can be extended to multiple matroid constraints~\cite{anari2019robust}. Anari et al~\cite{anari2019robust} also obtained a $(1 - \epsilon, O(\log k/\epsilon))$ approximation for a single knapsack constraint. In the following theorem, we complete the picture of \textsc{Robust-SubMax} by providing a bi-criteria approximation for \textsc{Robust-SubMax} under multiple knapsack constraints. In particular, we consider the following problem:
\begin{align*}
    \max_{X \subseteq V} \min_{i = 1:k} g_i(X), \mbox{ s.t } w_i(X) \leq b_i
\end{align*}
Here $w_i$ are modular functions.
\begin{theorem}
\label{robust-submax-mulknapsack}
Using a modified greedy algorithm, we achieve a $(1 - \epsilon, O(l.\mbox{ln }\frac{k}{\epsilon}))$ bi-criteria approximation for \textsc{Robust-SubMax} under $l$ knapsack constraints. Using the continuous greedy algorithm, we can achieve an improved factor of $(1 - \epsilon, O(\mbox{ln }\frac{k}{\epsilon}))$ for the same problem.
\end{theorem}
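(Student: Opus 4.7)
The plan is to follow the saturate reduction of Krause et al.\ to turn \textsc{Robust-SubMax} under $l$ knapsack constraints into a submodular cover problem, and then attack the cover with two different primitives to obtain the two stated bounds.

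\emph{Saturate reduction.} First I would guess a target value $\tau$ for the robust objective via binary search, and for fixed $\tau$ define truncations $\bar g_i^\tau(X) = \min\{g_i(X),\tau\}$ together with the aggregate $F_\tau(X) = \frac{1}{l}\sum_{i=1}^l \bar g_i^\tau(X)$, both of which are monotone submodular with $F_\tau(X)\le\tau$. The key identity is that $F_\tau(X)\ge \tau(1-\epsilon/l)$ forces every $\bar g_i^\tau(X)\ge(1-\epsilon)\tau$, since the total deficit from $\tau$ across the $l$ terms is at most $\epsilon\tau$. Setting $\tau$ to the largest binary-search value at most $\mathrm{OPT}$, the problem reduces to: find $X$ with $F_\tau(X)\ge \tau(1-\epsilon/l)$ while stretching each knapsack budget as little as possible.

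\emph{Modified greedy giving $O(l\ln(l/\epsilon))$.} Collapse the $l$ knapsacks into the single aggregate cost $c(j) = \sum_{i=1}^l c_i(j)/b_i$ with budget $l$, and run the cost-benefit greedy for submodular cover: repeatedly add the element $j\notin X$ maximizing $F_\tau(j\mid X)/c(j)$, stopping once $F_\tau(X)\ge (1-\epsilon/l)\tau$. By Wolsey's standard submodular-cover analysis, the aggregate cost used is at most $(1+\ln(l/\epsilon))$ times that of the optimal cover, which is at most $l$. Because the aggregate cost is a sum of normalized individual knapsack costs, every single $b_i$ is violated by at most $O(l\ln(l/\epsilon))$, giving the $(1-\epsilon, O(l\ln(l/\epsilon)))$ bound.

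\emph{Continuous greedy giving $O(\ln(l/\epsilon))$.} For the improved factor, following Anari et al., I would run the continuous greedy on the multilinear extension $\widetilde F_\tau$ over the polytope $\{x\in[0,1]^{|V|}: c_i^\top x\le b_i,\forall i\}$ for time $T = \Theta(\ln(l/\epsilon))$ rather than the usual $T=1$. This yields a fractional $x^*$ with $\widetilde F_\tau(x^*)\ge (1-e^{-T})\tau \ge \tau(1-\epsilon/l)$, and each knapsack LHS scaled by $T$. A dependent-rounding step (e.g.\ swap rounding) produces an integral $X$ that concentrates around $x^*$ coordinate-wise on each $\bar g_i^\tau$; a Chernoff bound plus a union bound over the $l$ truncated objectives lifts the aggregate guarantee to the per-$i$ guarantee $\bar g_i^\tau(X)\ge(1-\epsilon)\tau$ simultaneously, while each knapsack is respected up to factor $O(\ln(l/\epsilon))$.

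\emph{Main obstacle.} The subtle point is calibrating $T$ in the continuous-greedy step so that the Chernoff plus union bound over $l$ events does not wipe out the improvement over the discrete greedy: the choice $T=\Theta(\ln(l/\epsilon))$ gives each truncated objective just enough concentration for a per-$i$ bound, and the same $T$ appears as the per-knapsack blow-up. Getting this tight coupling between approximation ratio, concentration parameter, and budget violation to hold simultaneously across all $l$ knapsacks and all $l$ robust-objective terms is the delicate part of the argument.
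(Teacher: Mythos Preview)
Your first part (modified greedy) matches the paper's approach almost exactly: the paper also collapses the $l$ knapsacks into a single budget via $\hat w(X) = \frac{1}{l}\sum_i w_i(X)/b_i$ (or the \emph{modmax} variant $\sum_{j\in X}\max_i w_i(j)/b_i$), observes this is an $l$-approximation of $\max_i w_i(X)/b_i$, and then invokes the single-knapsack $(1-\epsilon,\ln(l/\epsilon))$ result of Anari et al.\ to conclude.

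For the continuous-greedy part, your high-level plan is right but the execution diverges from the paper in a way that matters. First, swap rounding is a matroid-base-polytope primitive and does not apply to intersections of knapsack constraints, so the concentration-plus-union-bound route you sketch would need a different rounding scheme altogether; getting simultaneous Chernoff-type concentration for $l$ submodular objectives under a knapsack rounding is not a standard tool you can just cite. Second, and more importantly, the paper sidesteps your ``main obstacle'' entirely. Following Anari et al., it runs the extended continuous greedy so as to obtain a \emph{per-$i$} fractional guarantee $G_i^c(y)\ge(1-\epsilon/l)c$ for every $i$ (not merely for the aggregate $\widetilde F_\tau$), hence also $\frac{1}{l}\sum_i G_i^c(y)\ge(1-\epsilon/l)c$. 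It then rounds the \emph{single} submodular sum $G^c$ using the Kulik--Shachnai--Tamir scheme for multiple knapsacks, obtaining a discrete $X$ with $g^c(X)\ge(1-\epsilon/l-\epsilon')c$ and $w_i(X)\le O(\ln(l/\epsilon))\,b_i$ for all $i$. Finally, since every $g_i^c(X)\le c$, a one-line counting argument shows that no single $g_i^c(X)$ can drop below $(1-\epsilon-l\epsilon')c$ without pulling the average below its guaranteed value. This deterministic averaging step replaces your concentration-plus-union-bound entirely, so the ``delicate coupling'' you flag as the crux simply does not arise in the paper's argument.
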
 
Since the greedy algorithm does not work directly for multiple knapsack constraints, we convert the problem first into a single knapsack constraint which provides the bi-criteria approximation. The approximation is worse by a factor $l$. Next, we use the continuous greedy algorithm, in a manner similar to~\cite{anari2019robust} and with the rounding scheme of~\cite{kulik2009maximizing}, to achieve the tight approximation factor of $(1 - \epsilon, O(\mbox{ln }\frac{k}{\epsilon}))$. Note that this bound is independent of the number of knapsack constraints $l$.
\arxiv{
\begin{proof}
We start with the optimization problem: 
\begin{align}
\max_{X \subseteq V} \min_{i = 1:k} g_i(X) \,\,\ | \,\,\ w_i(X) \leq b_i, i = 1, \cdots, l.
\end{align}
Note that the constraints $w_i(X) \leq b_i$ can equivalently be written as $\max_i \frac{w_i(X)}{b_i} \leq 1$. We can then define two approximations of this. One is the \emph{modmax} approximation, which is \\ $\sum_{i \in X} \max_{j \in 1:l} w_{ij}/b_i$ and the other is the \emph{avg} approximation: $\sum_{i \in X} \sum_{j \in 1:l} w_{ij}/lb_i$. Both these are $l$-approximations of $\max_i \frac{w_i(X)}{b_i} \leq 1$. Denote these approximations as $\hat{w}(X)$ and w.l.o.g assume that $\hat{w}(X) \leq \max_i w_i(X)/b_i \leq l\hat{w}(X)$. We can then convert this to an instance of \textsc{Robust-SubMax} with a single knapsack constraint:
\begin{align}
\max_{X \subseteq V} \min_{i = 1:k} g_i(X) \,\,\ | \,\,\ \hat{w}(X) \leq 1
\end{align}
From~\cite{anari2019robust}, we know that we can achieve a $(1 - \epsilon, \log k/\epsilon)$ bi-criteria approximation. In other words, we can achieve a solution $\hat{X}$ such that $\min_i g_i(\hat{X}) \geq (1 - \epsilon) \min_i g_i(X^*)$ and $\hat{w}(\hat{X}) \leq \log k/\epsilon$. Since we have that $\max_i w_i(X)/b_i \leq l\hat{w}(X)$, this implies $\max_i w_i(X)/b_i \leq l \mbox{ln } k/\epsilon$. This completes the first part. In practice, we can use both the approximations (i.e. the \emph{modmax} and the \emph{avg} approximation and choose the better among the solutions).

Next, we prove the second part of the theorem. This uses the continuous greedy algorithm and we use the proof technique from~\cite{anari2019robust}. In particular, first truncate $g_i$'s to $c$, so we can define $g_i^c(X) = \min(g_i(X), c)$. Define the multi-linear extension of $g_i^c$ as $G_i^c$. First we argue that we can obtain a solution $y(\tau)$ at time $\tau$ such that $G^c_i(y(\tau)) \geq (1 - e^{-\tau}) c, \forall i$ where $y(\tau) \in \tau \mathcal P_{\mathcal C}$. In other words, $y(\tau)$ satisfies $\langle y, w_i \rangle/b_i \leq \tau, \forall i$. This follows from Claim 1 in~\cite{anari2019robust} since the result holds for any down-monotone polytope~\cite{chekuri2011submodular}. Next, we use the rounding technique from~\cite{kulik2009maximizing}. In order to do this, we first set $\tau = \mbox{ln } k/\epsilon$ so we can achieve a solution so $G_i^c(y) \geq (1 - \epsilon/k)c$. Notice that this implies $G^c(y) = \sum_i G_i^c(y) \geq (1 - \epsilon/k)c$. Since $G^c$ is a single submodular function, we can round the obtained fractional solution $y$ using the rounding scheme from~\cite{kulik2009maximizing}. This will achieve a discrete solution $X$ such that $g^c(X) \geq (1 - \epsilon/k - \epsilon^{\prime})c$ and satisfies $\max_i w_i(X)/b_i \leq \mbox{ln }\frac{k}{\epsilon}$. The question is, what can we say about the original objective. We can show that for all $i = 1:k$, $g^c_i(X) \geq (1 - \epsilon - k\epsilon^{\prime})c$ since suppose this were not the case then there would exist at least one $i$ such that $g^c_i(X) < (1 - \epsilon - k\epsilon^{\prime})c$. Since $g^c_j \leq c, \forall j = 1:k$, this implies that $\sum_{j=1}^k g^c_j/l < c(k-1)/k + (1 - \epsilon - k\epsilon^{\prime})c < c(1 - \epsilon/k - \epsilon^{\prime})$ which refutes the fact that $g^c(X) = \sum_{j=1}^l g^c_j/k \geq (1 - \epsilon/k - \epsilon^{\prime})c$. Define a new $epsilon$ as $\epsilon + l\epsilon^{\prime}$ and this shows that we can achieve a discrete solution $X$ such that $g^c_i(X) < (1 - \epsilon)c$ and $\max_i w_i(X)/b_i \leq \mbox{ln }\frac{k}{\epsilon}$. We can then the binary search over the values of $c$ (similar to~\cite{anari2019robust}) and obtain a $(1 - \epsilon, O(\mbox{ln }\frac{k}{\epsilon}))$ bicriteria approximation.
\end{proof}
}

\section{\textsc{Robust-scsc} and \textsc{Robust-scsk}}
We begin by restating the optimization problems, \textsc{Robust-scsc} and \textsc{Robust-scsk}:
\begin{align*} 
    \mbox{Problem 3: } \min_{X \subseteq V}  \max_{i=1:l} f_i(X) 
    \,\, | \,\, g_i(X) \geq c_i, i = 1, \cdots k \nonumber \\
    \mbox{Problem 4: }\max_{X \subseteq V} \min_{i = 1:k} g_i(X)
    \,\, | \,\, f_i(X) \leq b_i, i = 1, \cdots l \nonumber
\end{align*}
Next, we define some notations and constructs (in addition to those defined in Section 2). Define $\delta^k_g = \log \max_{v \in V} \sum_{i = 1}^k \min(g_i(v), 1)$. Also when there is a single function $g$, define $\delta_g = \log \max_{v \in V} g(v)$. Finally, we introduce the notion of a bi-criteria approximation. A $(\alpha, \beta)$-bicriteria approximation for \textsc{Robust-scsc} achieves a set $\hat{X}$ such that $\max_i f_i(\hat{X}) \leq \alpha \max_i f_i(X^*)$ and $g_i(\hat{X}) \geq \beta c_i, \forall i$. Similarly a $(\beta, \alpha)$-bicriteria approximation for \textsc{Robust-scsk} achieves a set $\hat{X}$ such that $\min_i g_i(\hat{X}) \leq \beta \min_i g_i(X^*)$ and $\max_i f_i(\hat{X}) \leq \alpha b_i, \forall i$. $\alpha, \beta$ satisfy $\alpha \geq 1$ and $\beta \leq 1$. Finally, we shall often use $F(X)$ in place of $\max_{i=1:l} f_i(X)$ and $G(X)$ in the place of $\min_{i = 1:k} g_i(X)$. In the sections below, we shall study several important results including the connections between the \textsc{Robust-scsc} and \textsc{Robust-scsk}, hardness bounds and approximation algorithms. 

\subsection{Relation between \textsc{Robust-scsc} and \textsc{Robust-scsk}}
We first start by showing that \textsc{Robust-scsc} and \textsc{Robust-scsk} are closely related and an approximation algorithm for one of the problems provides an approximation algorithm for the other. Without loss of generality, we can assume $b_i = 1, c_i = 1$ and obtain the following problems:
\begin{align}
    \label{eqref1} \mbox{Problem 3}: \min_{X \subseteq V}  \max_{i=1:l} f_i(X)
    \,\, | \,\, \min_{i=1:k} g_i(X) \geq 1,  \\
  \label{eqref2} \mbox{Problem 4}: \max_{X \subseteq V} \min_{i = 1:k} g_i(X)
    \,\, | \,\, \max_{i=1:l} f_i(X) \leq 1
\end{align}
Henceforth we shall use equations~\eqref{eqref1} and \eqref{eqref2} for the formulations of \textsc{Robust-scsc} and \textsc{Robust-scsk}. Furthermore define \textsc{Robust-scsc}($c$) and \textsc{Robust-scsk}($b$) as generalizations where the constraints are $\min_{i=1:k} g_i(X) \geq c$ and $\max_{i=1:l} f_i(X) \leq b$ respectively. Algorithm 1 and 2 provide a binary search procedure to obtain bi-criteria approximation algorithms for one of the problems given a bi-criteria approximation algorithm for the other. However, Algorithms 1 and 2 should be run only if a) there exists a solution to both \textsc{Robust-scsc} and \textsc{Robust-scsk}, and b) The solution is not trivially the ground set $V$, and (a) can be checked by just ensuring the constraints can be satisfied.
    \begin{algorithm}
    \caption{Binary Search: \textsc{Robust-scsc} to \textsc{Robust-scsk}}
    \begin{algorithmic}[1]
      \STATE \textbf{Input:} An instance of \textsc{Robust-scsk}, and
      a $[\alpha, \beta]$ approximation algorithm for \textsc{Robust-scsc}, and $\epsilon \in [0,1)$.
    \STATE \textbf{Output: } $[(1 - \epsilon) \beta, \alpha]$ approx. for \textsc{Robust-scsk}.
    \STATE $c_{\min} \leftarrow \min_j \min_i g_i(j), c_{\max} \leftarrow \min_i g_i(V)$
\WHILE{$c_{\max} - c_{\min} \geq \epsilon c_{\max}$ }
\STATE $c \leftarrow [c_{\max} + c_{\min}]/2$
\STATE $\hat{X_c} \leftarrow [\alpha, \beta]$ approx. for \textsc{Robust-scsc}($c$).
\IF{$\max_i f_i (\hat{X_c}) > \alpha$}
\STATE $c_{\max} \leftarrow c$
\ELSE
\STATE $c_{\min} \leftarrow c$
\ENDIF
    \ENDWHILE
\STATE Return $\hat{X}_{c_{\min}}$
    \end{algorithmic}
    \label{alg:alg3}
    \end{algorithm}

    \begin{algorithm}[]
     \caption{Binary Search: \textsc{Robust-scsk} to \textsc{Robust-scsc}.}
    \begin{algorithmic}[1]
      \STATE \textbf{Input:} An instance of \textsc{Robust-scsc}, a
      $[\beta, \alpha]$ approximation algorithm for \textsc{Robust-scsk}, and $\epsilon > 0$.
    \STATE \textbf{Output: } $[ (1 + \epsilon)\alpha, \beta]$ approx. for \textsc{Robust-scsc}.
\STATE $b_{\min} \leftarrow \min_j \max_i f_i(j), b_{\max} \leftarrow \max_i f_i(V)$.
\WHILE{$b_{\max} - b_{\min} \geq \epsilon b_{\min}$ }
\STATE $b \leftarrow [b_{\max} + b_{\min}]/2$
\STATE $\hat{X_b} \leftarrow [\beta, \alpha]$ approx. for Problem2($b$)
\IF{$g(\hat{X_b}) < \beta$}
\STATE $b_{\min} \leftarrow b$
\ELSE
\STATE $b_{\max} \leftarrow b$
\ENDIF
    \ENDWHILE
    \STATE Return $\hat{X}_{b_{\max}}$
    \end{algorithmic}
    \label{alg:alg4}
    \end{algorithm}

\begin{theorem}\label{robust-scsc-scsk-eq}
Algorithm 1 achieves a $((1 - \epsilon)\rho, \sigma)$ approximation for \textsc{Robust-scsk} using \\ $\log\frac{\min_i g_i(V)/min_j \min_i g_i(j)}{\epsilon}$ calls to a $(\sigma, \rho)$ bicriteria approximation algorithm for \textsc{Robust-SCSC}. Similarly, Algorithm 2 achieves a $((1 + \epsilon)\sigma, \rho)$ bicriteria approximation for \textsc{Robust-SCSC} with $\log\frac{\max_i f_i(V)/min_j \max_i f_i(j)}{\epsilon}$ calls to a $(\rho, \sigma)$ bicriteria approximation for \textsc{Robust-scsk}.
\end{theorem}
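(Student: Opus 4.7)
The plan is binary search in both directions, exploiting that \textsc{Robust-SCSC} and \textsc{Robust-SCSK} are Lagrangian duals: each problem's constraint threshold is the other problem's objective target. I will set up the forward reduction in detail and describe the symmetric one afterwards.

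For the forward direction, let $X^{**}$ be an optimal solution for \textsc{Robust-SCSK} with value $c^* = \min_i g_i(X^{**})$. Given any threshold $c$, I will consider the \textsc{Robust-SCSC} instance that minimizes $\max_i f_i(X)/b_i$ subject to $g_i(X) \geq c$ for all $i$. When $c \leq c^*$, the set $X^{**}$ is feasible for this instance and attains objective at most $1$, so the $(\sigma,\rho)$ bicriteria SCSC oracle returns a set $\hat X$ satisfying $\max_i f_i(\hat X)/b_i \leq \sigma$ and $g_i(\hat X) \geq \rho c$ -- what I will call a \emph{certificate}. I will then binary-search over $c \in [L, U]$ with $L = \min_j \min_i g_i(\{j\})$ and $U = \min_i g_i(V)$ -- both valid bounds on $c^*$ by monotonicity of the $g_i$'s on a (WLOG nonempty) optimal $X^{**}$ -- for the largest $c$ at which the SCSC oracle still returns a certificate. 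Bisection with absolute precision $\epsilon L$ takes $O(\log(U/(L\epsilon)))$ oracle calls and locates some $\bar c \geq (1-\epsilon) c^*$ at which a certificate was produced; the corresponding $\hat X$ satisfies $\min_i g_i(\hat X) \geq \rho \bar c \geq (1-\epsilon)\rho c^*$ together with $f_i(\hat X) \leq \sigma b_i$ for every $i$, giving the claimed $((1-\epsilon)\rho,\sigma)$ bicriteria.

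The reverse direction mirrors this construction with the roles of $f$ and $g$ swapped. Let $X^*$ be optimal for \textsc{Robust-SCSC} with $b^* = \max_i f_i(X^*)$; for each threshold $b$ I will consider the \textsc{Robust-SCSK} instance that maximizes $\min_i g_i(X)/c_i$ subject to $f_i(X) \leq b$ for all $i$. When $b \geq b^*$, $X^*$ is feasible with objective at least $1$, so the $(\rho,\sigma)$ SCSK oracle returns $\hat X$ with $g_i(\hat X) \geq \rho c_i$ and $\max_i f_i(\hat X) \leq \sigma b$. Binary-searching in $[\min_j \max_i f_i(\{j\}),\,\max_i f_i(V)]$ for the \emph{smallest} $b$ at which such a certificate appears pins down some $\bar b \leq (1+\epsilon) b^*$ within $O(\log(U/(L\epsilon)))$ oracle calls and delivers a set with $\max_i f_i(\hat X) \leq \sigma \bar b \leq (1+\epsilon)\sigma b^*$ together with $g_i(\hat X) \geq \rho c_i$, which is the $((1+\epsilon)\sigma,\rho)$ bicriteria.

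The main subtlety I anticipate is that the predicate ``the oracle returns a certificate'' enjoys only a \emph{one-sided} monotonicity guarantee: it must hold for $c \leq c^*$ (resp.\ $b \geq b^*$), but the black-box oracle may behave arbitrarily on instances where the bicriteria assumption does not apply. This still suffices for the binary search, because we look for the extremal threshold at which the predicate still holds rather than for the crossover point of a bilaterally monotone predicate; the approximation losses $(1 \mp \epsilon)$ then come purely from the resolution of the binary-search grid and do not accumulate across iterations.
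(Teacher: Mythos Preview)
Your proposal is correct and follows essentially the same approach as the paper: reformulate each problem so that the other problem's objective becomes the constraint threshold, then binary-search over that threshold using the bicriteria oracle. The paper's own proof is actually far less detailed than yours---it simply rewrites Problems~3 and~4 in the normalized form $\min_i g_i(X)/c_i \geq 1$ and $\max_i f_i(X)/b_i \leq 1$, observes that the objectives are monotone (even if not submodular), and then defers entirely to Theorems~3.1--3.2 and Algorithms~2--5 of~\cite{nipssubcons2013} for the binary-search conversion; your write-up makes that machinery explicit and correctly handles the one-sided monotonicity of the oracle predicate.
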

\begin{proof}
Note that both $F$ and $G$ are monotone functions. Let $c = c_{\min}$ and $c^{\prime} = c_{\max}$. An important observation is that throughout the algorithm, the values of $c_{\min}$ and $c_{\max}$ satisfy $F(\hat{X}_{c_{\min}}) \leq \alpha$ and $F(\hat{X}_{c_{\max}}) > \alpha $.
This holds because of the following arguments. We start with $F(\hat{X}_{c_{min}})\leq \alpha$. Note that we begin the algorithm by setting
$c_{min}$ as $min_j G_i(j)$ and $c_{max}$ as $G(V)$. Given that $\hat{X}_{c_{min}}$ is an $(\alpha,\beta)$ approximation of \textsc{Robust-scsk}($c_{min}$),
it holds that $F(\hat{X}_{c_{min}}) \leq \alpha F(X^*_{c_{min}})$ where $X^*_{c_{min}}$ is the optimal solution of \textsc{Robust-scsk}($c_{min}$).
Note that $c_{min}$ is the smallest possible value of the function G. As a result, the optimal solution will be the minimum possible value of F,
i.e. $argmin_j F(j)$. Next, we show that $F(X^*_{c_{min}}) \leq 1$. This holds because, as discussed above, there exists a solution for \textsc{Robust-scsk} and there exists a solution $X^*$ such that
$F(X^*) \leq 1$. Note that since $X^*_{c_{min}}$ is the minimum possible value of $F$, $F(X^*_{c_{min}}) \leq F(X^*) \leq 1$. Now that we have shown that $F(\hat{X}_{c_{min}}) \leq \alpha$ at the beginning, Algorithm 1 always maintains this inequality. A symmetric argument holds for the other case and we skip it in the interest of space. 

Since the values of $c_{\min}$ and $c_{\max}$ satisfy $F(\hat{X}_{c_{\min}}) \leq \alpha$ and $F(\hat{X}_{c_{\max}}) > \alpha $., it holds that $F(\hat{X_{c}}) \leq \alpha$ and $F(\hat{X_{c^{\prime}}}) > \alpha $. Moreover, notice that at convergence $c^{\prime}/c = c_{\max}/c_{\min} = 1/(1 - \epsilon)$. Denoting
  $X^*_{c^{\prime}}$ as the optimal solution for \textsc{Robust-scsk}($c^{\prime}$),
  we have that $F(X^*_{c^{\prime}}) > 1$ (a fact which follows from
  the observation that $\hat{X_c}$ is a $[\alpha, \beta]$ approximation
  of \textsc{Robust-scsk}($c$). Hence if $X^*$ is the optimal solution of \textsc{Robust-scsk}, it
  follows that $G(X^*) < c^{\prime}$. The reason for this is that,
  suppose, $G(X^*) \geq c^{\prime}$. Then it follows that $X^*$ is a
  feasible solution for \textsc{Robust-scsk} ($c^{\prime}$) and hence $F(X^*) \geq
  F(X^*_{c^{\prime}}) > b$. This contradicts the fact that $X^*$ is an
  optimal solution for \textsc{Robust-scsk} (since it is then not even feasible). Next, notice that $\hat{X_c}$ satisfies that $G(\hat{X_c}) \geq \beta
  c$, using the fact that $\hat{X_c}$ is obtained from a $(\alpha,
  \beta)$ bi-criterion algorithm for \textsc{Robust-scsk}($c$). Hence,
\begin{align}
G(\hat{X_c}) 
\geq \beta c 
= \beta (1 - \epsilon) c^{\prime} 
> \beta (1 - \epsilon) G(X^*) 
\end{align}
We already saw that $F(\hat{X_{c}}) \leq \alpha$ and hence Algorithm 1 is a $((1 - \epsilon) \beta,
\alpha)$ approximation for \textsc{Robust-scsk}. In order to show the complexity, notice that $c_{\max} - c_{\min}$ is decreasing throughout the algorithm. At the beginning, $c_{\max} - c_{\min} \leq G(V)$ and at convergence, $c_{\max} - c_{\min} \geq \epsilon c_{\max}/2 \geq \epsilon \min_j G(j)/2$. The bound at convergence holds since, let $c^{\prime}_{max}$ and $c^{\prime}_{min}$ be the values at the previous step. It holds that $c^{\prime}_{\max} - c^{\prime}_{\min} \geq \epsilon c^{\prime}_{\max}$. Moreover, $c_{max} - c_{min} = (c^{\prime}_{\max} - c^{\prime}_{\min})/2 \geq \epsilon c^{\prime}_{\max}/2 \geq \epsilon c_{\max}$. Hence the number of iterations is bounded by $\log_2 \frac{[2G(V)/\min_j G(j)]}{\epsilon}$. 

The second part (analysis of Algorithm 2) can be shown similarly using a
symmetric argument.
\end{proof}

\subsection{Hardness of \textsc{Robust-scsc} and \textsc{Robust-scsk}}
Next we discuss the hardness of \textsc{Robust-scsc} and \textsc{Robust-scsk}. We shall assume the functions $f_i$'s and $g_i$'s belong to the same family and hence have the same curvature. The result below holds for the general case, i.e. $k > 1, l > 1$. Better approximations can be obtained with $k = 1$ or $l = 1$.
\begin{theorem}
No approximation algorithm can achieve a $(\alpha, \beta)$-bicriteria approximation factor satisfying $\alpha/\beta < \Omega(\max\{K(\sqrt{n}, \kappa_f), 1 + \delta^l_g\})$ when $f_i$'s and $g_i$'s are submodular. When $f_i$'s are modular and $g_i$'s are submodular, no approximation algorithm can achieve a bi-criteria factor satisfying $\alpha/\beta < 1 + \delta^l_g$. Moreover, \textsc{Robust-scsk} does not admit any single criteria approximation factor when $k > 1$ unless P = NP.\looseness-1
\end{theorem}
\begin{proof}
Note that \textsc{Robust-scsc} generalizes \textsc{SCSC} which yields a lower bound $\Omega(K(\sqrt{n}, \kappa))$ \cite{nipssubcons2013}. \textsc{Robust-scsk} also  generalizes robust submodular maximization under cardinality and knapsack constraints and which also gives a lower bound of $1 + \delta_g$~\cite{krause08robust}. Moreover, this generalization also proves that one cannot achieve any polynomial single-factor approximation for \textsc{Robust-scsk} when $k > 1$ unless P = NP~\cite{krause08robust}.
\end{proof}

In the following sections, we shall study approximation algorithms under various scenarios of $f$'s and $g$'s depending on whether they are modular or submodular. In particular, we shall see that the submodularity of $g_i$'s doesn't affect the approximation as much. So we divide the results into two sections depending whether the $f_i$'s are modular or submodular.\looseness-1

\subsection{Modular $f_i$'s and Submodular (or Modular) $g_i$'s} \label{mod_f_submod_g_sec}
We show the approximation guarantee under three cases: a) $k = 1, l \geq 1$, b) $k \geq 1, l = 1$, and c) $k \geq 1, l \geq 1$. 
\begin{theorem} \label{mod_f_submod_g_sec-theorem-complete}
The following approximation guarantees hold when the functions $f_i$'s are Modular and $g_i$'s are Modular or Submodular:
\begin{enumerate}
    \item Case I, $k = 1, l \geq 1$: When the function $g$ is Submodular, 	\textsc{Robust-scsc}  admits a $l(1 + \delta_g)$ single-criteria approximation. There exists a Continuous Greedy based algorithm which admits a $(1 - \epsilon, 1 - 1/e - \epsilon)$ bi-criteria approximation for 	\textsc{Robust-scsc}. and a $1 - 1/e - \epsilon$ single factor approximation algorithm for 	\textsc{Robust-scsk}. 
    There exists a faster and more practical Discrete Greedy based algorithm which provides a worse bi-criteria factor of $(l(1 - \epsilon), 1/2(1 - 1/e - \epsilon))$ for 	\textsc{Robust-scsc}  and $(1/2(1 - 1/e), l)$ bicriteria factor for 	\textsc{Robust-scsk}. When $g$ is Modular, 	\textsc{Robust-scsc}  admits a $2l$ single factor approximation and a bi-criteria FPTAS, and 	\textsc{Robust-scsk}  admits a single factor FPTAS.
    \item Case II, $k \geq 1, l = 1$: When $g$ is either submodular or modular, 	\textsc{Robust-scsc}  admits a $1 + \delta_g^k$ single factor approximation algorithm, while 	\textsc{Robust-scsk}  admits $(1 - \epsilon, \max(1 + \delta_g^k, \log k/\epsilon))$ bi-criteria approximation algorithm.
    \item Case III, $k \geq 1, l \geq 1$: 	\textsc{Robust-scsc}  admits a $l(1 + \delta_g^k)$ single factor approximation and a $(O(\log k/\epsilon), 1 - \epsilon)$ bi-criteria approximation based on a \emph{Continuous greedy} algorithm. For 	\textsc{Robust-scsk} , the Continuous Greedy provides a $(1 - \epsilon, O(\log k/\epsilon))$ bicriteria approximation. There exists more efficient discrete greedy algorithm which provides an approximation guarantee of $(O(\log k/\epsilon), l(1 - \epsilon))$ and $(l(1 - \epsilon), O(\log k/\epsilon))$ for \textsc{Robust-scsc}  and \textsc{Robust-scsk}  respectively,
\end{enumerate}
\end{theorem}
Note the following. First, when $k = 1$, 	\textsc{Robust-scsc}  becomes Robust Submodular Set Cover (\textsc{Robust-ssc}), a problem which has not yet been formally studied. Two special cases of Theorem~\ref{mod_f_submod_g_sec-theorem-complete} have been studied. When $k = 1$, \textsc{Robust-scsk}  reduces to submodular maximization subject to multiple knapsack constraints~\cite{kulik2009maximizing}. Similarly, when $k \geq 1, l = 1$, 	\textsc{Robust-scsk}  generalizes \textsc{Robust-SubMax} under a single knapsack constraint which was studied in~\cite{anari2019robust}. 

Before proving this result, we show an important result we shall use. Define $F(X) = \max_i f_i(X)$ when $f_i$'s are modular. Consider two relaxations of $F$. The first is $\hat{F}(X) = \sum_{i \in X} \max_{j = 1:l} f_j(i)$ and the second is $\tilde{F}(X) = 1/l \sum_{i = 1}^l \sum_{j \in X} f_i(j) = \sum_{i = 1}^l f_i(X)$.
\begin{lemma}\label{mod-approximations}
 Given $f_i(j) \geq 0$, it holds that $\hat{F}(X) \geq F(X) \geq \frac{1}{l} \hat{F}(X)$. Furthermore, $\tilde{F}(X) \leq F(X) \leq l\tilde{F}(X)$. \looseness-1
 \end{lemma}
 \begin{proof}
We first prove the second part.  Its easy to see that $F(X) = \max_i f_i(X) \leq \sum_i f_i(X) = l\tilde{F}(X)$ which is the second inequality. The first inequality follows from the fact that $\sum_i f_i(X) \leq l \max_i f_i(X)$. To prove the first result, we start with proving $F(X) \leq \hat{F}(X)$. For a given set $X$, let $i_X$ be the index which maximizes $F$ so $F(X) = \sum_{j \in X} f_{i_X}(j)$. Then $f_{i_X}(j) \leq \max_i f_i(j)$ from which we get the result. Next, observe that $\hat{F}(X) \leq \sum_{i = 1}^l f_i(X) = l\tilde{F}(X) \leq lF(X)$ which follows from the inequality corresponding to $\tilde{F}$. Hence proved.
 \end{proof}
Finally, we show the impact of using the two modular approximations $\hat{F}(X)$ and $\tilde{F}(X)$ in the place of the function $F$ in \textsc{Robust-scsc}  and \textsc{Robust-scsk}. Lets call these Problems 1$^{\prime}$ and 2$^{\prime}$.
\begin{lemma}\label{mod-approx-bounds}
 Given $(\alpha, \beta)$ and $(\beta, \alpha)$ bicriteria approximation algorithms for Problems 1$^{\prime}$ and 2$^{\prime}$, we can obtain $(l\alpha, \beta)$ and $(\beta, l\alpha)$ bicriteria approximations for \textsc{Robust-scsc}  and \textsc{Robust-scsk}.
\end{lemma}
\begin{proof}
Lets start with $\tilde{F}$ (a very similar argument will also show this bound for $\hat{F}$). Using $\tilde{F}$ in 	\textsc{Robust-scsc} , we obtain $\min\{\tilde{F}(X) | G(X) \geq 1\}$. We can achieve a set $\tilde{X}$ such that $\tilde{F}(\tilde{X}) \leq \alpha \tilde{F}(X^*)$ and $G(\tilde{X}) \geq \beta$ where $X^*$ is the optimal solution to 	\textsc{Robust-scsc}. Now note that $F(\tilde{X}) \leq \alpha \tilde{F}(X^*) \leq l\alpha F(X^*)$ which proves the bound. Now lets use $\tilde{F}$ in 	\textsc{Robust-scsk}. 	\textsc{Robust-scsk}  becomes $\max\{g(X) | \tilde{F}(X) \leq 1\}$. Let $X^*$ be the optimal solution for the Problem $\max\{G(X) | F(X) \leq 1\}$. Then since $\tilde{F}(X^*) \leq F(X^*) \leq 1$, $X^*$ also a feasible solution for 	\textsc{Robust-scsk}$^{\prime}$. Let $\tilde{X}$ be the $(\beta, \alpha)$ bicriteria approximation for 	\textsc{Robust-scsk}$^{\prime}$. Then observe that $G(\tilde{X}) \leq \beta G(X^*)$ since $X^*$ is a feasible solution for \textsc{Robust-scsk}$^{\prime}$. Furthermore, $F(\tilde{X}) \leq l \tilde{F}(\tilde{X}) \leq l \alpha$. Hence $\tilde{X}$ is a $(\beta, l\alpha)$-approximate solution.
\end{proof}
We now prove Theorem~\ref{mod_f_submod_g_sec-theorem-complete}. 
\begin{proof}
\textbf{Case I, $k = 1, l \geq 1$: } Note that when $l \geq 1, k = 1$, 	\textsc{Robust-scsk}  reduces to Submodular Maximization subject to multiple knapsack constraints~\cite{kulik2009maximizing} and we can use the continuous greedy algorithm along with the rounding technique from~\cite{kulik2009maximizing}. Following Theorem 1 from \cite{kulik2009maximizing}, we get the $1 - 1/e - \epsilon$ single factor approximation algorithm for \textsc{Robust-scsk}. We can obtain the $(1 - \epsilon, 1 - 1/e - \epsilon)$ bi-criteria approximation for 	\textsc{Robust-scsc}  by combining this with Algorithm 1. Unfortunately, the continuous greedy is not easy to implement and can be very slow in practice (the complexity of the algorithm is $O(d/\epsilon^4 p(n))$ where $p(n)$ is a polynomial in $n$~\cite{kulik2009maximizing}). To remedy this, we can approximate the function $\max_i f_i(X)$ using Lemma~\ref{mod-approximations} into two modular approximations. This converts the multiple knapsack constraints into a single knapsack constraints which we can run twice (using the two modular approximations) and pick the better ones. There exists a discrete greedy algorithm with a higher order polynomial complexity~\cite{sviridenko2004note} (roughly a $O(n^4)$ algorithm) with a $1 - 1/e$ approximation and a much faster greedy algorithm with $O(n\log n)$ average time complexity~\cite{krause05note} which admits a $1/2(1 - 1/e)$ approximation. We can then use Lemma~\ref{mod-approx-bounds} from which we add an extra factor of $l$. Next, we show the $l(1 + \delta_g)$ single factor approximation for \textsc{Robust-scsc}. Again we use the modular approximations $\hat{F}(X)$ and $\tilde{F}(X)$ in the place of $F$, which reduces Robust Submodular Set Cover (	\textsc{Robust-scsc} ) into an instance of submodular set cover. In particular, the resulting algorithm first computes the two modular approximations for $F$ and then solves a submodular set cover (\textsc{ssc}) problem~\cite{wolsey1982analysis}. Finally, we pick the better amongst the two objectives. The greedy algorithm for \textsc{ssc} implies a guarantee of $1 + \delta_g$ and we can combine this with Lemma~\ref{mod-approx-bounds} to get the desired bounds. Finally, consider when $g(X)$ is modular, 	\textsc{Robust-scsk}  is a multiply constrained knapsack problem for which there exists fully polynomial time approximation schemes~\cite{pisinger1995algorithms,zhang1999approximation}. Thanks to the binary search reduction (Algorithm 2), we can achieve a bi-criteria FPTAS for 	\textsc{Robust-scsc}. We can similarly achieve a single factor approximation for 	\textsc{Robust-scsc}  by first, using the Modular approximation for $F$ which turns 	\textsc{Robust-scsc}$^{\prime}$ into a knapsack problem. Since the LP relaxation provides a $2$-approximation solution to 	\textsc{Robust-scsc}$^{\prime}$, we can combine this with Lemma~\ref{mod-approx-bounds} which gives the $2l$ approximation.

\textbf{Case II, $k \geq 1, l = 1$: } In this case, we can use the \emph{saturate} trick from~\cite{krause08robust} and use a proxy function $\hat{G}(X) = \sum_{i = 1}^k \min(g_i(X), 1)$. In particular, note that the constraint $G(X) = \min_{i = 1:k} g_i(X) \geq 1$ is equivalent to $\hat{G}(X) \geq l$. Hence 	\textsc{Robust-scsc}  is exactly an instance of submodular set cover (since $f$ is modular), and we achieve the approximation factor of $1 + \delta^k_g$. Note also that 	\textsc{Robust-scsk}  is a robust submodular maximization subject to a single knapsack constraint, and we can use Algorithm 1 to transform this into a bi-criteria factor (note this is actually very similar to the algorithm of \cite{krause08robust}). We can also directly solve 	\textsc{Robust-scsk}  using the algorithm from \cite{anari2019robust} (see Algorithm 3 in \cite{anari2019robust}). The authors attempt to directly solve the robust problem rather than relying on the binary search transformation. The resulting algorithm which is a greedy algorithm, can be much faster than the corresponding binary search procedure and achieves a factor of $(1 - \epsilon, \log k/\epsilon)$ (see Corollary 1 in~\cite{anari2019robust}). This factor gets rid of the dependence on $g$ but adds a somewhat worse dependence on $\epsilon$. Finally, we note that the bounds are the same even when the functions $g_i$'s are modular.

\textbf{Case III, $k \geq 1, l \geq 1$: } Next, we study the case with $l > 1, k > 1$, i.e. the most general case (equations~\eqref{eqref1} and \eqref{eqref2}. There are three main algorithms here. The first algorithm directly uses the approximations $\tilde{F}$ and $\hat{F}$ on $F$. 	\textsc{Robust-scsc}  then becomes an instance of submodular set cover after the \emph{saturate} trick from above, we get the approximation factor of $l(1 + \delta_g)$. The second algorithm is the modified greedy algorithm from~\cite{anari2019robust} which we directly apply to two modular approximations of $F$ (and picks the better one). Since we obtain a factor of $(1 - \epsilon, \log k/\epsilon)$ with a single knapsack, we get a cumulative bound of $(1 - \epsilon, O(l.\log \frac{k}{\epsilon}))$ which takes into account the fact that our modular approximation to $F$ is itself a $l$-approximation. While this is a worse approximation factor, the algorithm is a discrete greedy algorithm~\cite{anari2019robust} and is scalable. Finally, we can use the continuous greedy algorithm for \textsc{Robust-SubMax} with multiple knapsack constraints (from Theorem 9). This will enable us to get rid of the additional $l$ factor.
\end{proof}
Similar to the previous case, the same bounds hold when $g_i$'s are modular.

\subsection{Submodular $f_i$'s, Submodular (or Modular) $g_i$'s}
In this section, we shall assume for simplicity that all the submodular functions $f_i$'s have the same curvature $\kappa_f$. The results can be trivially be extended even when the curvatures are not the same. In this section, we study the following main classes of algorithms. 
\begin{itemize}
    \item \textbf{Majorization-Minimization: } The first is Majorization Minimization (\textsc{mmin}). \textsc{mmin} proceeds as follows. Initialize $X^0 = \emptyset$. Then obtain $X^{t+1}$ in round $t+1$ by solving \textsc{Robust-scsc}  and \textsc{Robust-scsk} , but using the upper bounds $m^{f_i}_{X^t}(X)$ in place of the functions $f_i$. We proceed only if we make sufficient progress. Once we convert this into a problem with modular $f_i$'s, we can use the Algorithms from Section 5.3.
    \item \textbf{Ellipsoidal Approximation: } The second algorithm is the Ellipsoidal Approximation (\textsc{ea}). Recall that any polymatroid (monotone submodular) function $f$, can be approximated by a function of the form $\sqrt{w^f(X)}$ for a certain modular weight vector $w^f \in \mathbb R^V$. The idea of the \textsc{ea} is to approximate $f_i$'s by the corresponding Ellipsoidal Approximations. Once we convert this into a problem with modular $f_i$'s, we can use the Algorithms from Section 5.3.
    \item \textbf{Average Approximations of \textsc{mmin} and \textsc{ea}: } Another framework of algorithms is to approximate the function $F(X) = \max_i f_i(X)$ by the \emph{average approximation} $\tilde{F}(X) = \frac{\sum_{i = 1}^l f_i(X)}{l}$ (very similar to what we did in the case of \textsc{Robust-SubMin}. Note that $\tilde{F}$ is a submodular function and we can replace $F$ in \textsc{Robust-scsc}  and \textsc{Robust-scsk}  with $\tilde{F}$. Note that we immediately get back the special case of $l = 1$. We can similarly use \textsc{mmin} and \textsc{ea} and we call the resulting algorithms \textsc{mmin-aa} and \textsc{ea-aa}. Once we convert this into a problem with modular $f_i$'s, we can use the Algorithms from Section 5.3.
\end{itemize}
\begin{theorem}\label{thm:gen_f_gen_g}
The following approximation guarantees hold for Submodular functions $f_i$'s and $g_i$'s. 
\begin{enumerate}
    \item Case I, $l = 1, k \geq 1$: \textsc{mmin} achieves a $(1 + \delta^k_g)K(n, \kappa_f)$ approximation for 	\textsc{Robust-scsc}  and $(1 - \epsilon, K(n, \kappa_f)\log k/\epsilon)$ bicriteria approximation for 	\textsc{Robust-scsk}. \textsc{ea} achieves an approximation factor of $O(K(\sqrt{n}\log n, \kappa_f)[1 + \delta_g])$ for 	\textsc{Robust-scsc}  and \\ $(1 - \epsilon, O(K(\sqrt{n}\log n, \kappa_f)[1 + \delta_g]))$ for 	\textsc{Robust-scsk}. 
    \item Case II, $l \geq 1, k \geq 1$: \textsc{mmin-aa} achieves a $l(1 + \delta^k_g)K(n, \kappa_f)$ approximation for 	\textsc{Robust-scsc}  and a bicriteria approximation of $(1 - \epsilon, lK(n, \kappa_f)\log k/\epsilon)$ for 	\textsc{Robust-scsk}. Similarly \textsc{ea-aa} achieves an approximation factor of $O(lK(\sqrt{n}\log n, \kappa_f)[1 + \delta^k_g])$ for 	\textsc{Robust-scsc}  and $(1 - \epsilon, O(lK(\sqrt{n}\log n, \kappa_f)[1 + \delta^k_g]))$ for 	\textsc{Robust-scsk}. \textsc{mmin} achieves a bicriteria approximation of $(K(n, \kappa_f)\log{k/\epsilon}(1 + \epsilon), 1 - \epsilon)$ for 	\textsc{Robust-scsc}  and a factor of $(1 - \epsilon, K(n, \kappa_f)\log{k/\epsilon})$ for 	\textsc{Robust-scsk}. Similarly, \textsc{ea} achieves a bicriteria approximation of $(O(\sqrt{n}\log n \sqrt{\log{k/\epsilon}}), 1 - \epsilon)$ for 	\textsc{Robust-scsc}  and a factor of $(1 - \epsilon, O(\sqrt{n}\log n \sqrt{\log{k/\epsilon}}))$ for 	\textsc{Robust-scsk}. For the special case of $l \geq 1, k = 1$, the parameters $\delta^k_g$ are replaced by $\delta_g$ and $\log k/\epsilon$ by $1 - 1/e$ wherever applicable. The rest of the results remain the same.
\end{enumerate}
\end{theorem}
Recall from Lemmata~\ref{curvaturemmin} and \ref{curvatureea} that $m^f_{\emptyset}$ and $\hat{f}$ are bounded approximations of the function $f$. The main idea of \textsc{mmin} and \textsc{ea} are to use these approximations. We now prove Theorem~\ref{thm:gen_f_gen_g}.

\begin{proof}
\textbf{Case I, $l = 1, k \geq 1$: } Lets start with \textsc{mmin}. Since we iteratively approximate $f$ with a Modular upper bound, we then iteratively solve \textsc{Robust-scsc}  and \textsc{Robust-scsk}  with $f$ being Modular (covered in Section 5.3, Case II). We show the result for the first iteration of \textsc{mmin}. Recall we start with the empty set $X^0 = \emptyset$. Following Theorem~\ref{mod_f_submod_g_sec-theorem-complete} (Case II), we can achieve a set $\hat{X}$ which satisfies $m^f_{\emptyset}(\hat{X}) \leq (1 + \delta^k_g)m^f_{\emptyset}(X^*)$. This implies the following sequence of inequalities: $f(\hat{X}) \leq m^f_{\emptyset}(\hat{X}) \leq (1 + \delta_g)m^f_{\emptyset}(X^*) \leq (1 + \delta_g)K(n, \kappa_f) f(X^*)$. Finally, since we assume that make progress in every iteration, the objective value at convergence will be no worse than this factor. We can similarly show the bound for Problem 4 which we skip in the interest of space. Next, consider the \textsc{ea} algorithm. We replace $f$ with its curve-normalized approximation $\hat{f}$. However since $\hat{f}$ is not modular any more, it is not obvious how we can optimize this over with the submodular set cover constraints (note that the we can replace the constraints in 	\textsc{Robust-scsc}  with a single submodular set cover constraint using the \emph{saturate} trick). Moreover, we can use the fact that this function is of the form $\sqrt{w_1(X)} + w_2(X)$ and can be optimized up to a fully polynomial approximation scheme~\cite{nikolova2010approximation,nipssubcons2013}. In particular, $\hat{f}$ can be optimized over a submodular set cover constraint up to a factor of $(1 + \delta_g)(1 + \epsilon)$~\cite{nipssubcons2013}. Combining this with the fact that $\hat{f}(X) \leq f(X) \leq O(K(\sqrt{n}\log n, \kappa_f)) \hat{f}(X)$, we get the required bound.

\textbf{Case II, $l \geq 1, k \geq 1$: } Next, we shall look at the case $k \geq 1$ and $l \geq 1$, and also highlight some minor differences in the results for $k = 1$. The \textsc{aa} algorithms simply approximate $F(X) = \max_i f_i(X)$ with the average approximations $\tilde{F}$. Combining the bounds from Case I and Lemma~\ref{mod-approx-bounds}, we get the results.  While the \textsc{aa} framework obtains tight bounds based on the curvature and $n$, there is an additional dependence on $l$. We can get rid of this by directly using \textsc{mmin} and \textsc{ea} on \textsc{Robust-scsc}  and \textsc{Robust-scsk}. We first study \textsc{mmin}. Since $f_i$'s are replaced by $m^{f_i}$'s which are modular functions, we can use Theorem~\ref{mod_f_submod_g_sec-theorem-complete} (Case III) and combine it with the approximation bound for $m^{f_i}$. With \textsc{ea}, it is no longer straight-forward to optimize the curve normalized function. Instead, we use the basic \textsc{ea}: $\sqrt{w_{f_i}(X)}$. Lets start with 	\textsc{Robust-scsk}. The constraints here are $\sqrt{w_f^i(X)} \leq b_i$ which is equivalent to $w_f^i(X) \leq b_i^2$. We then have \textsc{Robust-SubMax} subject to multiple knapsack constraints and we can use Theorem~\ref{mod_f_submod_g_sec-theorem-complete} (case III). Similarly we can take the square of the objective function in 	\textsc{Robust-scsc}  and we again obtain 	\textsc{Robust-scsc}  with Modular $f_i$'s. Also note here that the same analysis goes through when we have $k = 1$, i.e. a single function $g$. In this case, we use the results from Case I instead of Case III in Section 5.3, and we essentially need to replace the $\log k/\epsilon$ by $1 - 1/e$ and $\delta^k_g$ by $\delta_g$.
\end{proof}

\begin{figure}[h]
    \centering
    \includegraphics[width = 0.40\textwidth]{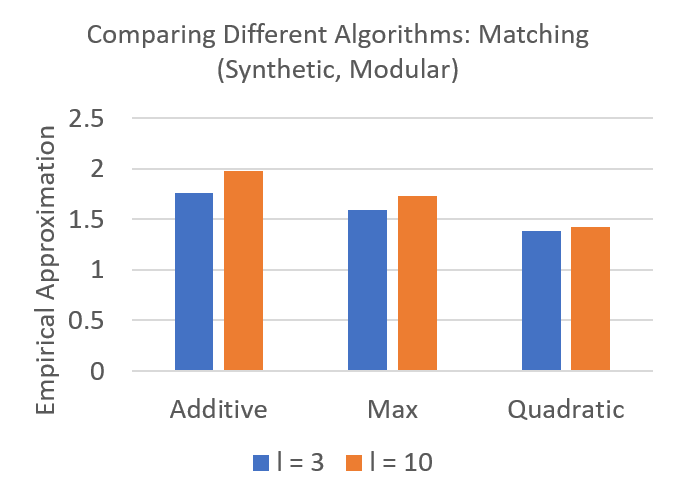} 
    \includegraphics[width = 0.49\textwidth]{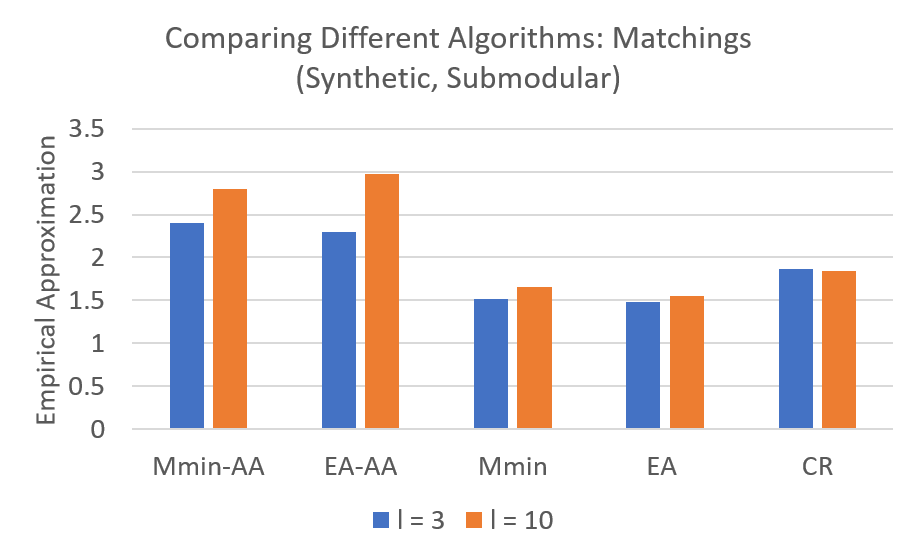}   
    \includegraphics[width = 0.51\textwidth]{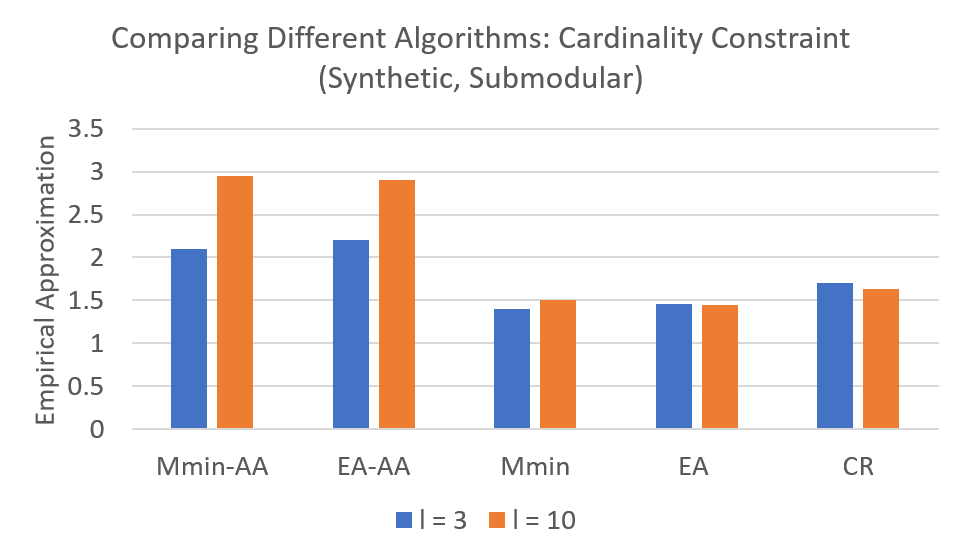}
    \caption{Synthetic experiments with top left: Robust Modular Minimization subject to matching constraints, top right: \textsc{Robust-SubMin} with matching constraints and bottom: \textsc{Robust-SubMin} with cardinality constraints.}
    \label{fig:synthetic}
\end{figure}

\section{Experimental Results}
\subsection{Synthetic Experiments} 
The first set of experiments are synthetic experiments. We define $f_j(X) = \sum_{i = 1}^{|\mathcal C_j|} \sqrt{w(X \cap C_{ij})}$ for a clustering $\mathcal C_j = \{C_1, C_2, \cdots, C_{|\mathcal C_j|}$. We define $l$ different random clusterings (with $l = 3$ and $l = 10$). We choose the vector $w$ at random with $w \in [0, 1]^n$. We compare the different algorithms under cardinality constraints $(|X| \leq 10)$ and matching constraints. For cardinality constraints, we set $n = 50$. For matchings, we define a fully connected bipartite graph with $m = 7$ nodes on each side and correspondingly $n = 49$. The results are over 20 runs of random choices in $w$ and $\mathcal C$'s and shown in Figure 2 (top row). The first plot compares the different algorithms with a modular function (the basic min-max combinatorial problem). The second and third plot on the top row compare the different algorithms with a submodular objective function defined above. In the submodular setting, we compare the average approximation baselines (\textsc{mmin-aa}, \textsc{ea-aa}), Majorization-Minimization (\textsc{mmin}), Ellipsoidal Approximation (\textsc{ea}) and Continuous Relaxation (CR). For the Modular cases, we compare the simple additive approximation of the worst case function, the Max-approximation and the quadratic approximation.  We use the graduated assignment algorithm~\cite{gold1996graduated} for the quadratic approximation to solve the quadratic assignment problem. For other constraints, we can use the efficient algorithms from~\cite{buchheim2018quadratic}

Figure 2 shows the results in the modular setting.  First, as expected (Figure 2, top left), we see that in the modular setting, the simple additive approximation of the max function doesn't perform well and the quadratic approximation approach performs the best. Figure 2 (top right and bottom) show the results with the submodular function under matching and cardinality constraint. Since the quadratic approximation performs the best, we use this in the \textsc{mmin} and \textsc{ea} algorithms. First we see that the average approximations (\textsc{mmin-aa} and \textsc{ea-aa}) don't perform well since it optimizes the average case instead of the worst case. Directly optimizing the robust function performs much better. Next, we observe that \textsc{mmin} performs comparably to \textsc{ea} though its a simpler algorithm -- a fact which has been noticed in several other scenarios as well~\cite{rkiyersemiframework2013,nipssubcons2013,jegelka2011-inference-gen-graph-cuts}

\begin{figure}[h]
    \includegraphics[width = 0.50\textwidth]{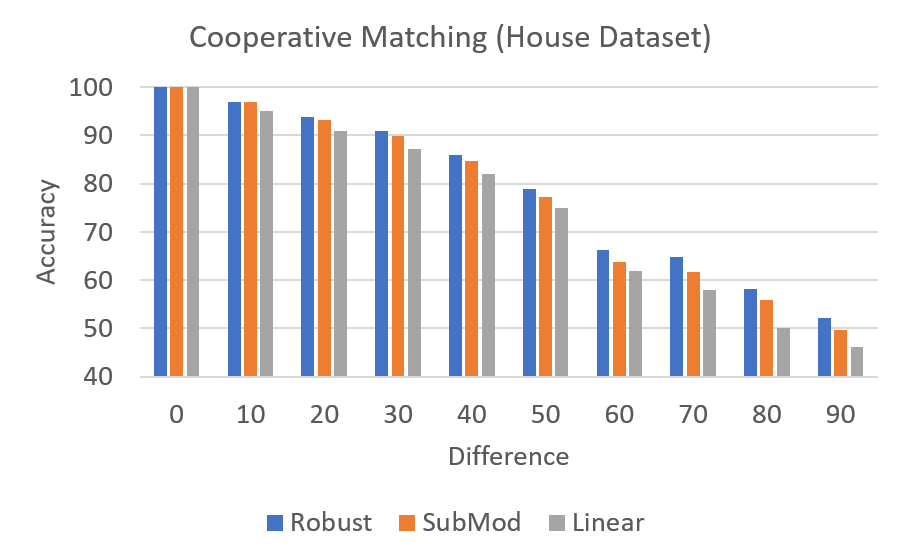}
    \includegraphics[width = 0.50\textwidth]{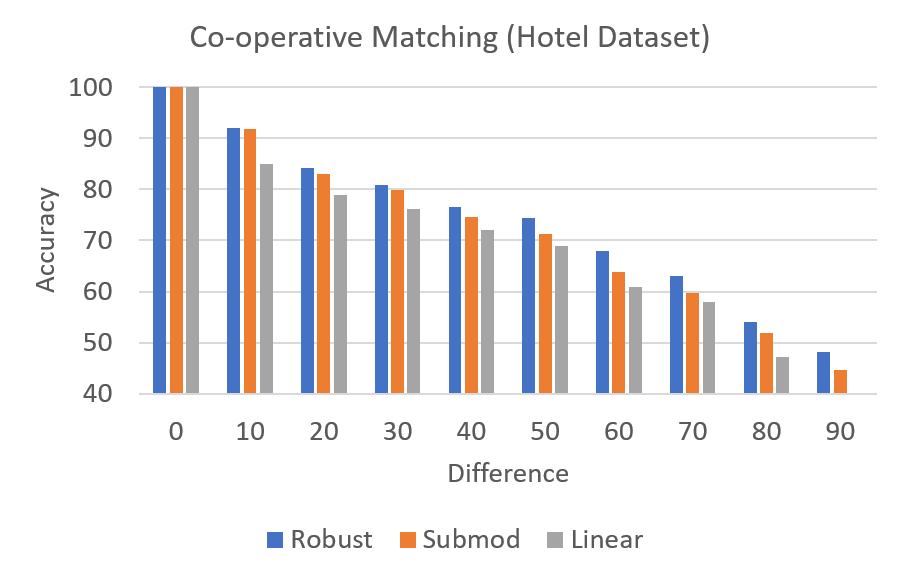}
    \caption{Experiments on Co-operative Matching with left: House and right: Hotel datasets. We see that \textsc{Robust-SubMin} consistently outperforms \textsc{SubMod} and \textsc{Linear} Assignment baselines~\cite{iyer2019near}.}
    \label{fig:my_label}
\end{figure}

\subsection{Co-operative Matchings} 
In this set of experiments, we compare \textsc{Robust-SubMin} in co-operative matchings. We follow the experimental setup in~\cite{iyer2019near}, which we describe here for completeness. We begin by describing the choice of the submodular functions. We cluster the keypoints separately for each of the two images into $k = 3$ groups. The clustering can be performed based on
the pixel color map, or simply the distance of the key-points. That is,
each image has $k$ clusters. Let $\{ V_i^{(1)} \}_{i=1}^k$ and $\{
V_i^{(2)} \}_{i=1}^k$ be the two sets of clusters.  We then compute
the linear assignment problem, letting $\mathcal M \subseteq \mathcal
E$ be the resulting maximum matching. We then partition the edge set $\mathcal
E = \mathcal E_1 \cup \mathcal E_2 \cup \dots \mathcal E_k \cup
\mathcal E'$ where $\mathcal E_i = \mathcal M \cap (V_\ell^{(1)}
\times V_s^{(2)} )$ for $\ell,s \in \{ 1, 2, \dots, k\}$ corresponding
to the $i$'th largest intersection, and $\mathcal E' = \{\mathcal E
\backslash \cup_{i = 1}^k \mathcal E_i\}$ are the remaining edges
either that were not matched or that did not lie within a frequently
associated pair of image key-point clusters. We then
define a submodular function as follows:
\begin{align}\label{coopobj}
f(S) = \sum_{i = 1}^k \psi_i(w(S \cap \mathcal E_i)) + 
w(S \cap \mathcal E'), 
\end{align}
which provides an additional discount to the edges $\{\mathcal
E_i\}_{i = 1}^k$ corresponding to key-points that were frequently
associated in the initial pass. The problem of co-operative matching then becomes an instance of constrained submodular minimization with the submodular function (over the edges) defined above, and a constraint that the edges form a matching. To instantiate \textsc{Robust-SubMin}, we construct $l$ clusterings of the pixels: $\{(\mathcal E^1_1, \cdots, \mathcal E^1_k), \cdots, (\mathcal E^l_1, \cdots, \mathcal E^l_k)\}$ and define a robust objective:
\begin{align}
    f_{robust}(S) = \max_{i = 1:l} \sum_{j = 1}^k \psi_j(w(S \cap \mathcal E^i_j)) + 
w(S \cap \mathcal E^i{'})
\end{align}

We run this on the House and Hotel Datasets~\cite{caetano2009learning}. The results in Figure 3.The house dataset has $111$ images, while the hotel dataset has $101$ images. We consider all possible pairs of images, with differences between the two images ranging from $0:10:90$ in both cases. We consider three algorithms. The first is simple Linear assignment (i.e. modular function), the second is a single submodular function minimization subject to matching constraint (\textsc{SubMod}), and the third is \textsc{Robust-SubMin} subject to a matching constraint. We use the PLA algorithm from~\cite{iyer2019near} for \textsc{SubMod}, since that was observed to outperform \textsc{mmin}. Note that \textsc{SubMod} just uses a single clustering. For \textsc{Robust-SubMin}, we construct $l = 10$ clusterings. We observe that $l = 10$ tends to perform the best, since with too many clusterings, the algorithms become both slower and also perform worse. The different clustering each obtained by different random initializations of k-means. From the results (Figure 3), we see that \textsc{Robust-SubMin} consistently outperforms both \textsc{Linear} and \textsc{Sub-Min} by about 2 - 5\% in accuracy. 

\begin{figure}[h]
    \centering
        \includegraphics[width = 0.5\textwidth]{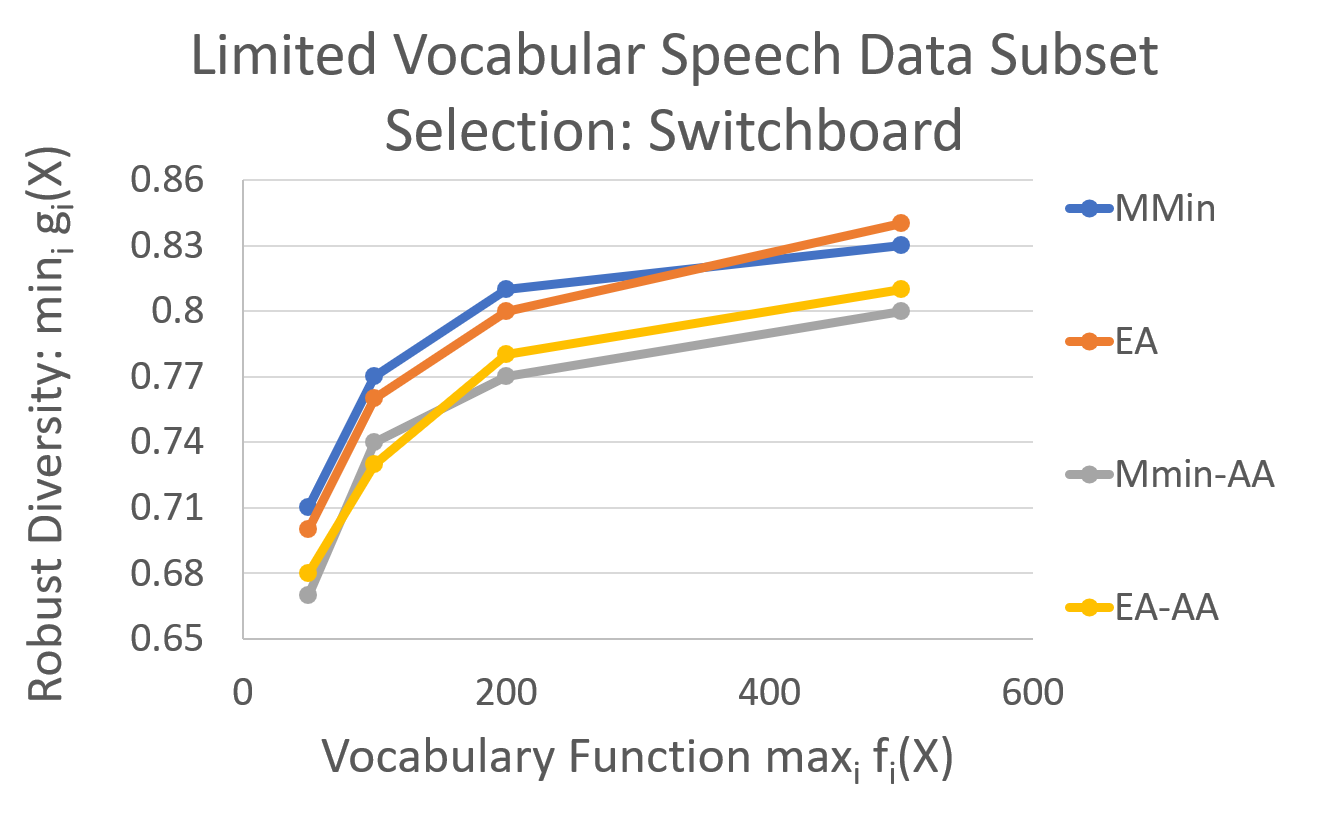}
    \caption{Data Subset Selection Results}
    \label{fig:my_label}
\end{figure}

\subsection{Robust Data Subset Selection} 
In this section, we highlight the scalability and efficacy of our algorithms for \textsc{Robust-Scsc} and \textsc{Robust-Scsk} on robust data subset selection with limited vocabulary.  We follow the experimental setup from~\cite{liu2017svitchboard}. The function $f$ is the vocabulary function, $f(X) = w(\gamma(X))$~\cite{liu2017svitchboard}. In our experiments, we define several different functions $f_i$ (with $l = 10$). To define each function, we randomly delete 20\% of the words from the vocabulary and define $f_i(X) = w(\gamma_i(X))$. The function $g$ is the diversity function and we use the feature based function from~\cite{liu2017svitchboard}. Similar to the vocabulary functions, we define $k = 10$ different functions $g_i$ each defined via small perturbations of the feature values. The functions $g_i$ are normalized so they are all between $[0, 1]$. The intuition of selecting $f_i$'s and $g_i$'s in this way is to be robust to perturbations in features and vocabulary words. These then become instances of \textsc{Robust-Scsc} or \textsc{Robust-Scsk}. In this experiment, we apply the $f_i$'s as constraints, and as a result, we use the formulation of \textsc{Robust-Scsk}. We run these experiments on Switchboard-I~\cite{godfrey1992switchboard} and we restrict ourselves to a ground-set size of $|V| = 100$. We compare the four algorithms: \textsc{mmin-aa}, \textsc{ea-aa}, \textsc{mmin}, and \textsc{mmin}. The results are shown in Figure 4. As expected, \textsc{mmin-aa} and \textsc{ea-aa} don't perform as well because they just average the $\max$ with the average so do not model the \emph{robustness}. Furthermore, though \textsc{ea} achieves the best theoretical approximation factors, it performs comparably to \textsc{mmin} empirically, though \textsc{mmin} is much faster in practice. \textsc{mmin} runs in around 1.2 seconds while \textsc{ea} takes around 3 minutes. For larger data-sets, the difference is seconds versus hours. 

\begin{figure}
    \centering
    \includegraphics[width = 0.45\textwidth]{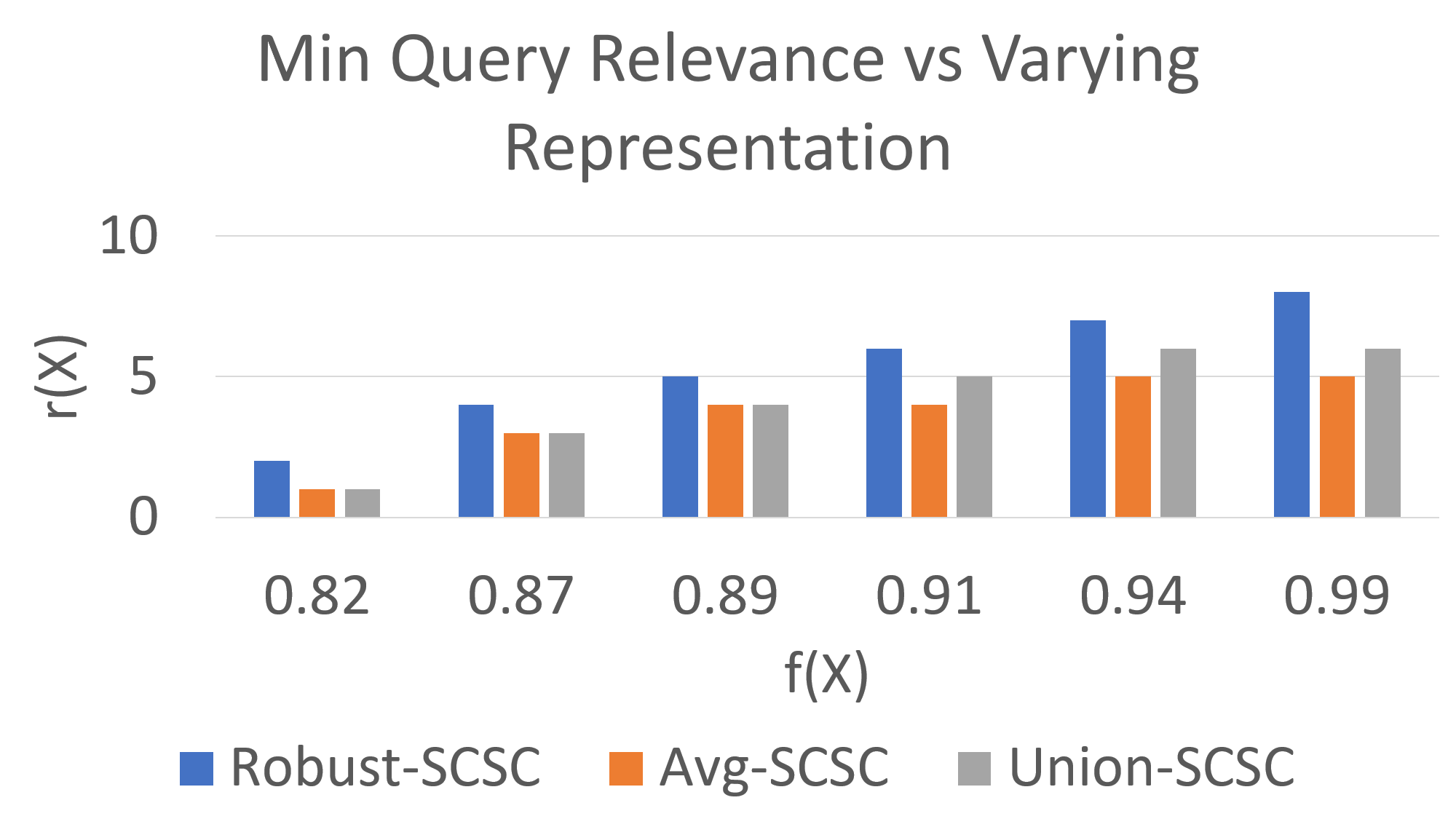}
    \includegraphics[width = 0.45\textwidth]{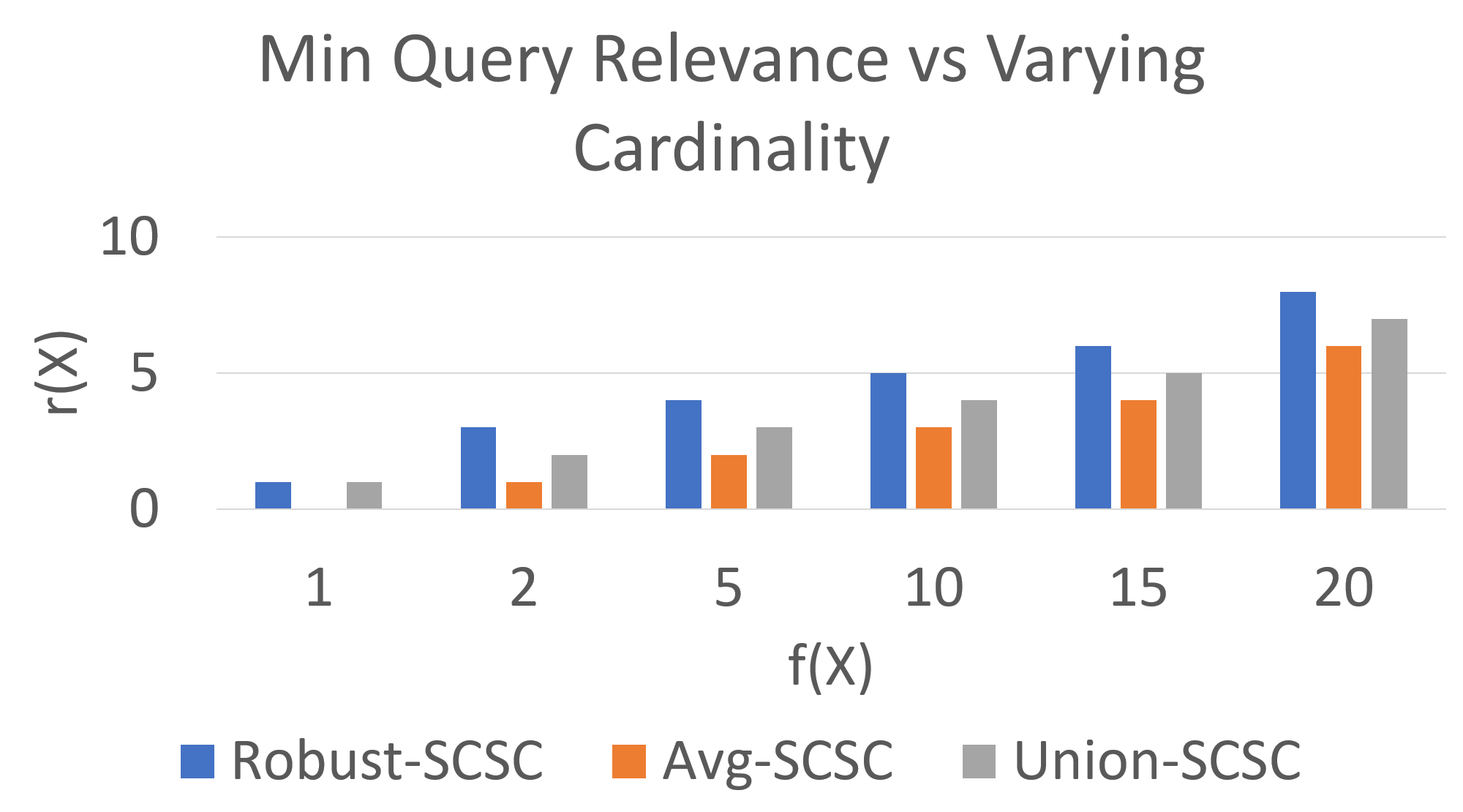}
    \vspace{4mm}
    
    \includegraphics[width = 0.5\textwidth]{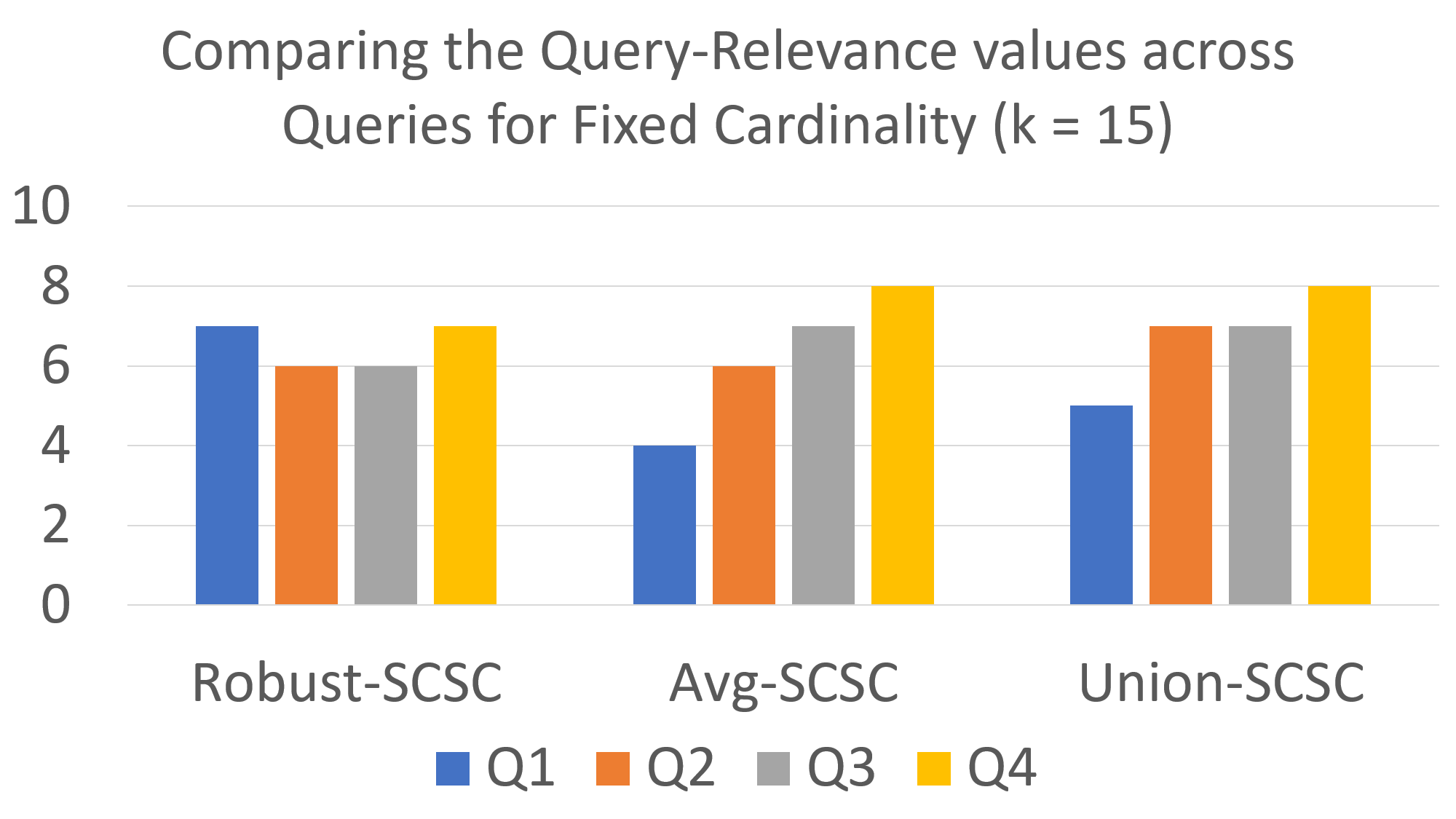}
    \caption{Comparing the performance of \textsc{Robust-Scsc} to baselines (\textsc{Avg-Scsc} and \textsc{Union-Scsc}. top left figure compares the minimum query relevance $\min_i R(X, Q_i)$ for different coverage values of $g(X)$, top right figure compares minimum query relevance $\min_i R(X, Q_i)$ for different values of cardinality (subset size), and finally, the bottom figure shows that the robust version is fair across the different queries compared to the Avg and Union versions, and tries to have the summary cover the queries \emph{equally}.}
    \label{fig:my_label}
\end{figure}
\subsection{Image Collection Summarization with Multiple Query Sets}
Next, we study the performance of \textsc{Robust-Scsk} in the real world application of image collection summarization. In particular, we study the problem of maximizing a single submodular function $g$ while minimizing the query-conditional functions $h(X | Q_i)$ for all queries $Q_i$. To evaluate the algorithms, we use the image collection summarization dataset from~\cite{tsciatchek14image}. The dataset has
14 image collections with 100 images each and provides many human summaries per collection. We use the approach of~\cite{kaushal2020unified} to consider query focused summarization. In particular, \cite{kaushal2020unified}
extend the dataset of \cite{tsciatchek14image} by creating dense noun concept annotations for every image. They start by designing the universe of concepts based on the 600 object classes in
OpenImagesv6~\cite{kuznetsova2020open} and 365 scenes in Places365~\cite{zhou2014learning}. They eliminate concepts common to both (for
example, closet) to get a unified list of 959 concepts. The authors then use pretrained YOLOv3~\cite{redmon2018yolov3} and ResNet50~\cite{he2016deep} models for objects and scenes and get initial annotations, which are then cleaned by human annotators. 

We use this dataset and create a set of ten query sets, with each query set containing four concepts each. Examples of these concepts are: "Person, Indoor, Outdoor, Forest, Trees, Beach, Bench, Building" etc. We represent each image using the probabilistic
feature vector taken from the output layer of YOLOv3 model pre-trained on OpenImagesv6 and concatenate it with the probability vector of scenes from the output layer of ResNet50 trained on
Places365 dataset. 

In our experiments, we use two choices for $g$. For $g$, we use either the cardinality or the facility location function, while we use the facility location function as the choice for $h$. As discussed in the motivating section, we pose the problem as minimizing the maximum query conditional function, while setting a coverage constraint on $g$. In other words, we solve: 
\begin{align}
    \min_{X \subseteq V} \max_{i = 1:l} h(X | Q_i) \,\, | \,\, g(X) \geq c
\end{align}
Since in our experiments $g$ is monotone, as we vary $c$, we vary the size of the obtained subset. We compare \textsc{Robust-Scsc} (i.e. the optimization problem above) with two baselines. The first is where we just take the sum, i.e. minimize a single average function $\sum_{i = 1}^l h(X | Q_i)$ and pose this as \textsc{Scsc}: $\min_{X \subseteq V} \sum{i = 1}^l h(X | Q_i) \,\, | \,\, g(X) \geq c$. Note that this is very similar to the Average-Approximation scheme and we call this \textsc{Avg-Scsc}. The second baseline is we simply take a union over all the the query concepts and use the function $h(X | \cup_i Q_i)$. In this case, the optimization problem is: $\min_{X \subseteq V}h(X | \cup_i Q_i) \,\, | \,\, g(X) \geq c$. We call this \textsc{Union-Scsc}. For the facility location function, we compute a similarity kernel over the feature vector constructed above. To evaluate the algorithms, we plot the tradeoff between the values of $g$ (which is either the subset size or the representation) and the query relevance. We measure query relevance for each query $Q_i$ by finding the number of images in $X$ which are relevant to $Q_i$ (we denote this by $r(X, Q_i)$). We then plot $g(X)$ on the y-axis and $r(X) = \min_i r(X, Q_i)$ on the x-axis as we vary the subset size. 

Since we know the \textsc{mmin} performs comparable to \textsc{ea}, we use \textsc{mmin} as the algorithm here due to its scalability. Similarly for the baselines (\textsc{Avg-Scsc} and \textsc{Union-Scsc}), we use the \textsc{mmin} algorithms since both those problems are instances of \textsc{Scsc}~\cite{nipssubcons2013}. The results are shown in Figure 5. We run this on the first two datasets out of the 14 image collection sets. We see that the \textsc{Robust-Scsk} with \textsc{mmin} consistently outperforms the single query terms with both the sum and the union. This implies that considering all queries individually (via the robust or worst case formulation) does make sure we obtain a set which is relevant to \emph{all} the queries and is therefore fair. We see that \textsc{Robust-Scsc} has a higher value of the minimum query relevance $\min_i r(X, Q_i)$ across the queries compared to \textsc{Union-Scsc} and \textsc{Avg-Scsc}. This is even more clear in the bottom plot in figure 5, where we see that because of the min-max formulation, \textsc{Robust-Scsc} ensures that all of the four queries are equally represented in the subset $X$, in contrast to the Avg and Union versions.

\section{Conclusions}
In this paper, we study four formulations Robust Submodular Optimization (\textsc{Robust-SubMin}, \textsc{Robust-SubMax}, \textsc{Robust-Scsc} and \textsc{RobustScsk}) from a theoretical and empirical perspective. We study approximation algorithms and hardness results. We propose a scalable family of algorithms (including the majorization-minimization algorithm) and theoretically and empirically contrast their performance. We empirically compare and contrast the performance of our algorithms on three real world applications: \textsc{Robust-SubMin} with matching constraints for image correspondence, \textsc{Robust-Scsk} for data subset selection with vocabulary constraints, and \textsc{Robust-Scsc} for image collection summarization with multiple queries. In each case, we show that, a) the designed algorithms, and particularly, the majorization-minimization perform better than other heuristics like the average approximation, and also perform comparably to the much more complicated ellipsoidal approximation approach; and b) the robust formulation in fact outperforms the single submodular function based non-robust approaches. In future work, we would like to address the gap between the hardness and approximation bounds, and achieve tight curvature-based bounds in each case. We would also like to study other settings and formulations of robust optimization in future work.\looseness-1

\bibliographystyle{theapa}
\bibliography{ecai}
\end{document}